\documentclass[a4paper]{article}

\usepackage[utf8]{inputenc} 
\usepackage[T1]{fontenc}    
\usepackage{url}            
\usepackage{booktabs}       
\usepackage{amsfonts}       
\usepackage{nicefrac}       
\usepackage{microtype}      
\usepackage[usenames,dvipsnames]{xcolor}
\usepackage{natbib}

\usepackage{tikz}
\usetikzlibrary{decorations.markings,calc,backgrounds,intersections}
\usepackage{adjustbox}
\usepackage{graphicx}
\usepackage{geometry}
\usepackage{indentfirst} 
\usepackage{amsmath,amsthm,amsfonts,amssymb} 
\usepackage[normalem]{ulem} 
\usepackage{bm} 
\usepackage{color} 
\usepackage{makeidx} 
\usepackage[linewidth=1pt,nobreak=true,framemethod=TikZ]{mdframed} 
\usepackage[titletoc]{appendix} 
\usepackage{enumitem} 
\usepackage{tikz-cd}  

\usepackage{authblk}

\usepackage[colorlinks]{hyperref} 
\bibliographystyle{unsrt}

\allowdisplaybreaks[4]  
\newtheorem{theorem}{Theorem}[section]  
\newtheorem{lemma}[theorem]{Lemma}  
\newtheorem{proposition}[theorem]{Proposition}  
\newtheorem{definition}[theorem]{Definition}  
\newtheorem{corollary}[theorem]{Corollary}  

\title{Constructing 3D Rotational Invariance and Equivariance \\
  with Symmetric Tensor Networks}

\begin{document}
\author[1]{Meng Zhang}
\author[1]{Chao Wang}
\author[5]{Hao Zhang}
\author[1]{Shaojun Dong}
\author[2,1,3,4]{Lixin He \thanks{Corresponding to: helx@ustc.edu.cn}}

\affil[1]{Institute of Artificial Intelligence, Hefei Comprehensive National Science Center, Hefei, 230088, People's Republic of China}
\affil[2]{CAS Key Laboratory of Quantum Information, University of Science and Technology of China, Hefei 230026, People's Republic of China}
\affil[3]{Synergetic Innovation Center of Quantum Information and Quantum Physics, University of Science and Technology of China, Hefei 230026, China}
\affil[4]{Hefei National Laboratory, University of Science and Technology of China, Hefei 230088, China}
\affil[5]{School of Information Science and Technology, University of Science and Technology of China, Hefei, 230026, China}

\date{} 
\maketitle

\begin{abstract}
 Symmetry-aware architectures are central to geometric deep learning. We present a systematic approach for constructing continuous rotationally invariant and equivariant functions using symmetric tensor networks. The proposed framework supports inputs and outputs given as a tuple of Cartesian tensors of different rank as well as spherical tensors of different type. We introduce tensor network generators for invariant maps and obtain equivariant maps via differentiation. Specifically, we derive general continuous equivariant maps from vector inputs to Cartesian or spherical tensor output. Finally, we clarify how common equivariant primitives in geometric graph neural networks arise within our construction.
\end{abstract}

\section{Introduction}
Many scientific problems involve systems with 3D geometric structures, such as molecules and materials. For these systems, we can always describe them using 3D coordinates. However, physical quantities in the natural world do not depend on any specific coordinate system. They exhibit invariance or equivariance under changes of coordinates, such as rotations and translations. Therefore, when performing machine learning tasks involving physical quantities, it is advantageous in data efficiency and generalization to incorporate invariance or equivariance into the neural network hypotheses \citep{geiger2022e3nneuclideanneuralnetworks}.

Designing invariant or equivariant functions in neural networks architecture is a crucial step in developing the symmetry-aware machine learning models. For instance, in equivariant geometric  graph neural networks (GNN) with vector features, the equivariant operations on vector features  are typically vector summation $v_1 + v_2$ \citep{pmlr-v139-satorras21a,pmlr-v139-schutt21a,9711441} and vector product $v_1 \times v_2$ \citep{le2022equivariantgraphattentionnetworks}. For Tensor Field Network (TFN) with higher-type spherical tensors  features, the most commonly used equivariant operations are
the tensor product (TP) operations \citep{thomas2018tensorfieldnetworksrotation,NEURIPS2018_488e4104}. In addition, recent works also use the  higher-rank Cartesian tensors as the equivariant feature in the message passing, in which the typical equivariant operations used are tensor contraction and summation of tensors \citep{Wang2024}.

Beyond these practically motivated symmetry functions, there is a growing interest in a full characterisation of the group invariant and group equivariant neural networks that can be built for given input and output types. Prior theory has made important progress in special cases. For Cartesian tensors, work \cite{NEURIPS2021_f1b07759} characterize the $O(n)$ and $SO(n)$ invariant and equivariant functions with vector inputs and outputs; works \cite{gregory2024learningequivarianttensorfunctions} and \cite{10756158} show how to construct $O(n)$ symmetric polynomials of Cartesian tensors. Relatedly, work \cite{pmlr-v202-pearce-crump23a} uses the Brauer algebra \citep{brauer1937algebras} to derive $O(n)$ and $SO(n)$ equivariant linear maps between Cartesian tensors. On the spherical  tensor side, 3D steerable CNNs \citep{NEURIPS2018_488e4104} systematically handle spherical tensor features, but their characterisation is mainly focused on linear maps. These results offer valuable theoretical guidance, while leaving open the broader setting of more general rotational symmetric functions.

In this work, we present a full characterisation of continuous rotationally invariant and equivariant functions, which supports inputs and outputs given as a tuple of Cartesian tensors of different rank as well as spherical tensors of different type. The main tool we use is the symmetric tensor network \citep{PhysRevA.82.050301,PhysRevB.83.115125,PhysRevB.86.195114}, which is widely used in quantum many-body systems. Combining the classical invariance theory \citep{weyl1946classical} and Stone-Weierstrass theorem, we develop a framework for building generators of invariant polynomials, which we call \textbf{\textit{ tensor network generators}}, and demonstrate how the general continuous $SO(3)$ invariance and equivariance functions build from several special symmetric tensor primitives.

The main contributions of this work are:
\begin{enumerate}
\item  We introduce tensor network generators for invariant polynomials and show how to obtain equivariant polynomials from them by differentiation, capable of handling inputs and outputs composed of a tuple of Cartesian and spherical tensors.

\item We provide a complete characterisation of continuous $SO(3)$ invariant and equivariant functions, covering both linear and nonlinear maps, and show that they can be built from several special symmetric tensors combined with general functions that do not incorporate $SO(3)$ symmetry.

\item We derive the concrete and concise form of general continuous $SO(3)$ and $O(3)$ equivariant maps with vector inputs and Cartesian or spherical tensor output, which is the most prevalent scenario in scientific modeling.

\item We demonstrate that common equivariant primitives in geometric GNNs naturally emerge as special cases of our construction. Furthermore, our proposed graphical representation simplifies both the theoretical proofs and the practical design of symmetry aware architectures.

\end{enumerate}

\section{Preliminaries}
\subsection{Tensor network}
Tensor networks have been proven to be a powerful graphical language and computational tool across multiple disciplines. The roots of this diagrammatic notation can be traced back to the work of Roger Penrose in the 1970s \cite{penrose1971applications}. We further discuss related literature to situate our work within the broader field in Section \ref{re_wor}.

We first introduce the formalism of tensors, which are the building blocks of tensor networks. A tensor $T$ is a multi-dimensional array. We can denote its elements as $T_{i_{1},i_{2},\dots,i_{n}} \in \mathbb{R}^{I_{1} \times I_{2} \times \cdots \times I_{n}}$, where $I_k$ is the dimension of index $i_k$. The tensor also has a graphical representation. As shown in Fig.\ref{tns}(a), a tensor can be represented by a node with legs, where each leg corresponds
to an index of the tensor. A vector can be represented by a one-leg node, and a matrix can be represented by a two-leg node. 

A tensor network is a collection of tensors defined above. The legs connected between nodes are the indices 
needed to be summed over, this is called contraction.  Therefore, the  tensor network can be contracted to a single tensor, the index of which corresponds to the open leg of the tensor network. As shown in Fig.\ref{tns}(b), the tensor network describes the contraction of tensor $A$ and $B$, i.e., matrix multiplication. Furthermore, we can also give the  graphical representation of the derivative of the tensor network. As shown in Fig.\ref{tns}(c), the derivative of a tensor network with respect to a specific tensor $T$ (which appears only once in the network) is a tensor network where tensor $T$ is removed. 
We call a tensor network connected if its corresponding graph is connected. Otherwise, it is disconnected, and its contraction factorizes as the product of its connected components.

Actually, a tensor network represents a certain decomposition of a high-rank tensor. As shown in Fig.\ref{tns}(d), a N-leg tensor is decomposed into 2-leg and 3-leg tensors, where this decomposition is called tensor train decomposition \citep{PhysRevLett.91.147902,PhysRevB.73.094423,doi:10.1137/090752286}.

\begin{figure}[htb!]
    \centering
    \includegraphics[width=0.7\linewidth]{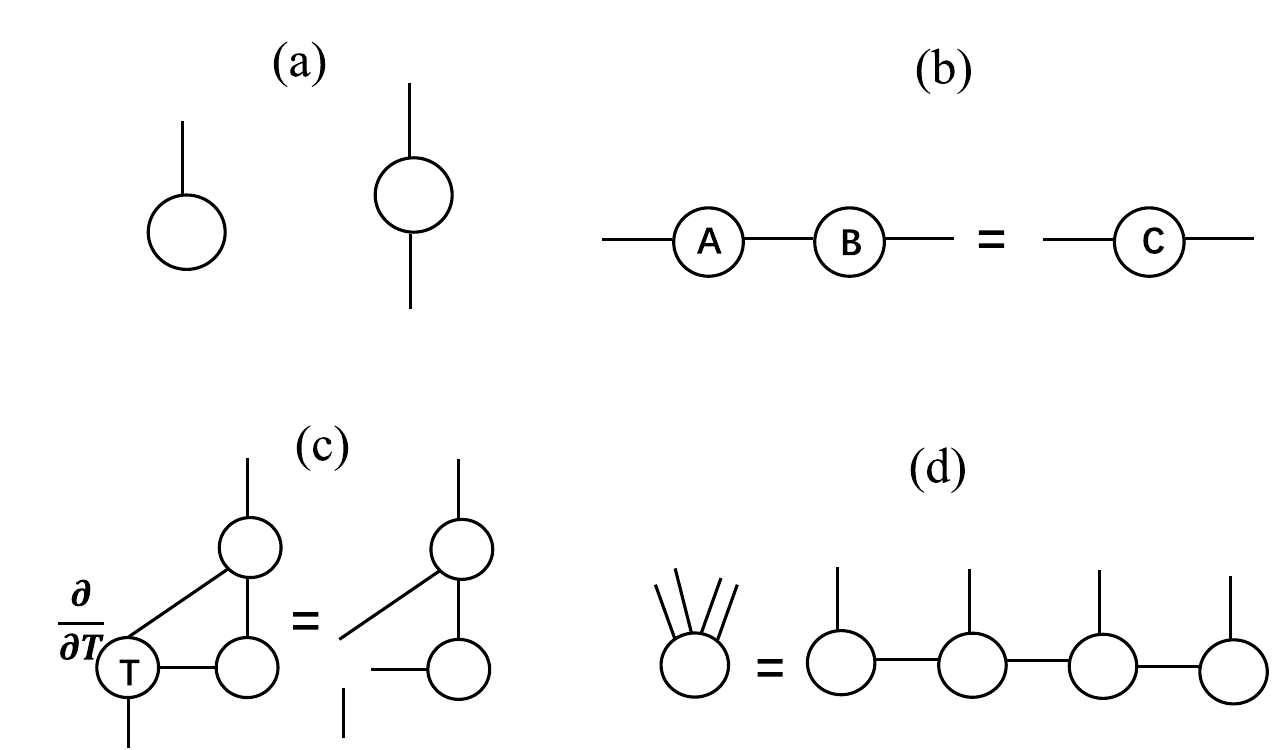}
    \caption{(a) Graphical representation of a one-leg tensor (vector) and a two-leg tensor (matrix). (b) The contraction of tensor $A$ and $B$, this is the matrix multiplication $C_{ik} = \sum_j A_{ij}B_{jk}$. (c) The derivative of a tensor network with respect to a specific tensor $T$, the result of which is a tensor network where tensor $T$ is removed. (d) Tensor train decomposition, namely, a N-leg tensor is decomposed to 2-leg and 3-leg tensors.
    }
    \label{tns}
\end{figure}

\subsection{Group invariance and equivariance}
We can give the definition of group invariance and equivariance functions as follows:
\begin{definition}
Let $G$ be a group which acts on linear spaces $V_1,\dots,V_n$ over a field $F$ by certain linear representation. 
An invariant function $f: \bigoplus_i V_i \rightarrow F$ is a multi-variable function such that for each $g\in G$,
\begin{equation}
  f(g \cdot \bm x_1, \cdots, g \cdot \bm x_n) = f(\bm x_1, \cdots, \bm x_n)
\end{equation}
\end{definition}

\begin{definition}
Let $G$ be a group which acts on linear spaces $V_1,\dots,V_n,U_1,\dots,U_m$ over $F$ by certain linear representation. An equivariant function $f: \oplus_j V_j \rightarrow \oplus_i U_i$ is a multi-variable function such that for each $g\in G$,
\begin{equation}
  f^i(g \cdot \bm x_1, \cdots , g \cdot \bm x_n) = g \cdot  f^i(\bm x_1, \cdots,\bm x_n)
\end{equation}
\end{definition}
where the $\cdot$ denotes the group action on a  linear space. For example, the inputs of the equivariance functions can be 3D coordinates of each atom in a molecule, and the outputs can be the force of each atom. In the remaining part of the  paper, we mainly focus on the $SO(3)$ group.

\subsection{Symmetric tensor network}

A symmetric tensor \citep{PhysRevA.82.050301,PhysRevB.83.115125,PhysRevB.86.195114} is a tensor that is invariant under a group action in the space of each of its indices. 

\begin{definition}
	Let $T_{i_1,\dots,i_n}\in \mathbb{R}^{I_{1} \times  \cdots \times I_{n}}$ be a tensor, and $\rho_k: G\mapsto  GL(I_k,\mathbb R)$ be the group representation on space of $k$-th index. $T_{i_1,\dots,i_n}$ is called a (group) {\bf  symmetric tensor} iff
	\begin{equation}
		\forall g\in G: \prod_k^{n} \left[\rho_k(g) \right]_{i_k,j_k}T_{i_1,\dots,i_n}=T_{j_1,\dots,j_n}
		\label{eqn:sym_cond}
	\end{equation}
\end{definition}
This can be illustrated by Fig. \ref{fig:symmetry-tensor}(a). A symmetric tensor network is a collection of symmetric tensors defined above. Fig. \ref{fig:symmetry-tensor}(b) illustrates that a network composed of symmetric tensors is itself symmetric. Thus, the contraction operation preserves tensor symmetry. This fundamental property allows us to employ a more restrictive formation for  tensors with predefined symmetries, namely, constructing the tensor network exclusively from symmetric tensors.

\begin{figure}[htb!]
    \centering
    \includegraphics[width=0.7\linewidth]{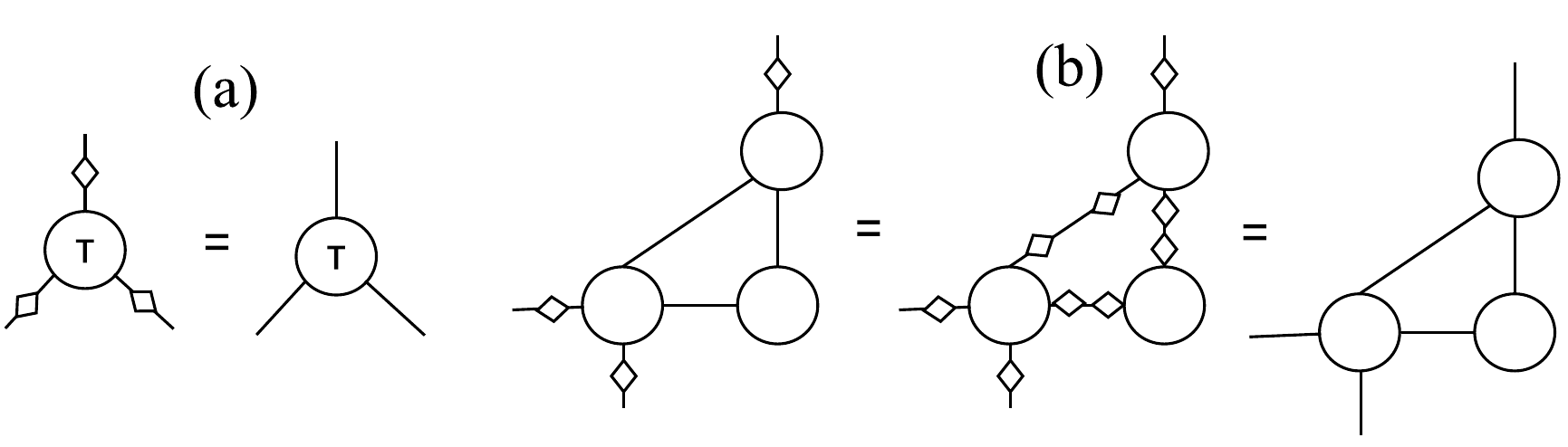}
    \caption{Symmetric tensors and symmetric tensor networks.
(a) The graphical illustration of the Equation (\ref{eqn:sym_cond}).
(b) Tensor networks that consist of symmetric tensors are also symmetric tensors as a whole. We first insert identity $\rho_i(g) \rho_i(g)^{T} = I$ on the contracted leg. Since every tensor in the network is symmetric, the tensor networks as a whole are also symmetric.
    }
    \label{fig:symmetry-tensor}
\end{figure}

\subsection{Cartesian and spherical tensors}

A \emph{Cartesian tensor} of rank $r$ is an element of the tensor product space
$T \in (\mathbb{R}^3)^{\otimes r}$.
Given a rotation $R \in SO(3)$, the group acts on $T$ by rotating each index,
$\left [R \cdot T \right]_{i_1 \dots i_r} = R_{i_1 j_1} \cdots R_{i_r j_r}\, T_{j_1 \dots j_r}$.

A \emph{spherical tensor} of type $l \in \{0,1,2,\dots\}$ is an element of the irreducible $SO(3)$ representation space $V_l$ of dimension $2l+1$. The spherical tensor $T$ with type $l$ has components $T_{m}$ and under a rotation $R$ these components transform according to
$\left[R \cdot T \right]_{m} = \sum_{m'=-l}^l [D^{(l)}(R)]_{m m'}\, T_{m'}$,
where $D^{(l)}(R)$ is the Wigner $D$-matrix of degree $l$. Spherical tensors are the basic building blocks of irreducible representations of $SO(3)$.

\section{Tensor network generators}\label{tns_ge}
In this section, we characterize invariant polynomials utilizing specific symmetric tensors. 

The polynomials of $n$ variables input ${\bm x}=(\bm x_1,\dots, \bm x_n)$, denoted by $F[V]$, constitute an algebra. The group invariant polynomials are
\begin{equation}
    F[V]^G =\{f\in F[V] \mid f(g\cdot \bm x)=f(\bm x)\ \text{for } g\in G\}.
\end{equation}
It's easy to see that the invariant polynomials form a subalgebra denoted by $F[V]^G$. The generators of $F[V]^G$ are a set of invariant polynomials $\{g_1,\dots,g_m\}$ such that any $f\in F[V]^G$ can be written as $f=q(g_1,\dots,g_m)$ for some polynomial $q$. Moreover, the Stone-Weierstrass theorem states that any continuous functions can be approximated by polynomials. It follows, therefore, that any invariant function can be approximated by a function of these generators $\{g_1,\dots,g_m\}$. We'll give a systematic method to construct the generators of $\mathbb R[V]^{SO(3)}$ by symmetric tensor network, which we call \textit{tensor network generators}.

\subsection{Vector inputs}
Firstly, let's consider the simplest case, where the inputs are 3D vectors $\bm x_1,\dots,\bm x_n \in \mathbb{R}^{3}$. In this case, the input space is $V=\mathbb{R}^{3n}$. Weyl \citep{weyl1946classical} proved that 
\begin{lemma}
Let inputs $\bm x_1,\dots,\bm x_n \in \mathbb{R}^{3}$. The set of generators of $\mathbb R[V]^{SO(3)}$ is $\{\bm x_i\cdot \bm x_j,(\bm x_i\times \bm x_j)\cdot \bm x_k\}$.
    Therefore any invariant polynomial takes the form of
$f(\bm x)=q(\{ \bm x_i\cdot \bm x_j,(\bm x_i\times \bm x_j )\cdot \bm x_k \})$.
\end{lemma}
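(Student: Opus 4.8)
The plan is to prove this as the classical First and Second (here only First is needed, giving the generating set) Fundamental Theorem of invariant theory for the rotation group $SO(3)$ acting diagonally on $n$ copies of $\mathbb R^3$. I would reduce the $SO(3)$ statement to the corresponding statement for $O(3)$ together with the extra ``pseudo-scalar'' invariant $\det$, and then invoke Weyl's theorem for the orthogonal group. Concretely, since $SO(3)\subset O(3)$ with index $2$, any $SO(3)$-invariant polynomial $f$ decomposes as $f = f_+ + f_-$, where $f_+(\bm x) = \tfrac12\big(f(\bm x) + f(R\bm x)\big)$ is $O(3)$-invariant and $f_-(\bm x) = \tfrac12\big(f(\bm x) - f(R\bm x)\big)$ changes sign under any fixed reflection $R\in O(3)\setminus SO(3)$; one then argues $f_-$ is divisible, in the invariant ring, by a single alternating invariant.

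The first main step is the $O(3)$ case: the ring $\mathbb R[\bm x_1,\dots,\bm x_n]^{O(3)}$ is generated by the inner products $\bm x_i\cdot\bm x_j$ ($i\le j$). This is Weyl's classical theorem \cite{weyl1946classical}; I would either cite it directly or sketch the standard proof via the Capelli identity / polarization: reduce a general invariant to a multilinear one on enough vector slots, apply the fact that $O(3)$-invariant multilinear functions of vectors are spanned by products of dot products (this is where the restriction to $3$ dimensions or ``$n$ arbitrary'' vs.\ $\dim$ matters, but the generation statement holds in all dimensions), then de-polarize. The second step handles the sign-changing part: if $f_-(R\bm x) = -f_-(\bm x)$, consider that for any $4$ of the input vectors the scalar triple products and Gram entries are algebraically related, and more importantly any anti-invariant is a $\mathbb R[\bm x]^{O(3)}$-combination of the determinants $D_{ijk} := (\bm x_i\times\bm x_j)\cdot\bm x_k = \det(\bm x_i,\bm x_j,\bm x_k)$. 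The clean way: pick indices so that $D_{123}\not\equiv 0$; on the open set where $\bm x_1,\bm x_2,\bm x_3$ are independent, $f_-/D_{123}$ is $O(3)$-invariant, hence a polynomial in dot products by Step 1, and one checks this rational function is actually regular (a polynomial) on all of $\mathbb R^{3n}$ — e.g. by a codimension / normality argument, since the vanishing locus of $D_{123}$ is irreducible of codimension $1$ and $f_-$ vanishes there to at least order one because $f_-$ is alternating.

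I would then close the loop: every dot product $\bm x_i\cdot\bm x_j$ and every triple product $D_{ijk}$ is manifestly $SO(3)$-invariant (the former under all of $O(3)$, the latter because $\det R = 1$), so they lie in $\mathbb R[V]^{SO(3)}$; conversely Steps 1–2 show they generate. Finally I would note that we only need $D_{ijk}$ with, say, $i<j<k$ and $\bm x_i\cdot\bm x_j$ with $i\le j$, since antisymmetry of $\det$ and symmetry of the inner product make the rest redundant, and that products $D_{ijk}D_{lmn}$ reduce to polynomials in dot products via the identity $D_{ijk}D_{lmn} = \det(\bm x_a\cdot\bm x_b)_{a\in\{i,j,k\},\,b\in\{l,m,n\}}$ — this last remark is not needed for ``is a generating set'' but is the bridge to the minimal generating data and to the Second Fundamental Theorem.

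The step I expect to be the main obstacle is showing that the a priori rational function $f_-/D_{123}$ is actually a \emph{polynomial} (regularity across $\{D_{123}=0\}$), i.e. the divisibility of an alternating invariant by the alternating generator inside the polynomial ring; handling this cleanly requires either the irreducibility and reducedness of the hypersurface $\{\det(\bm x_1,\bm x_2,\bm x_3)=0\}$ in $\mathbb R[\bm x]$ (so that a regular function vanishing on it is divisible by $\det(\bm x_1,\bm x_2,\bm x_3)$) plus an argument that $f_-$ does vanish there, or an explicit polarization/Gram-matrix bookkeeping. Everything else — the $O(3)$ generation theorem and the index-$2$ averaging trick — is standard, and since the excerpt attributes the result to Weyl I would in the actual write-up lean on \cite{weyl1946classical} for Step 1 and give only Step 2 in detail.
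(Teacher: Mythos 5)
The paper itself gives no proof of this lemma: it is stated as Weyl's classical first fundamental theorem for $SO(3)$ vector invariants and simply cited to \cite{weyl1946classical}. So your write-up is supplementary; your overall plan (the $O(3)$ theorem for the reflection-even part, plus an analysis of the reflection-odd part, plus the closing remark that $D_{ijk}D_{lmn}=\det(\bm x_a\cdot\bm x_b)$ reduces even products of determinants) is a legitimate classical route and most of it is sound.

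However, your ``clean way'' for Step 2 contains a genuine gap. You assert that the odd part $f_-$ vanishes on the hypersurface $\{D_{123}=0\}$ ``because $f_-$ is alternating'', and hence that $f_-/D_{123}$ is regular, so $f_-$ is divisible by the single determinant $D_{123}$. This is false: $f_-$ is odd under the \emph{simultaneous} reflection of all $n$ vectors, which says nothing about its behaviour on the locus where the particular triple $\bm x_1,\bm x_2,\bm x_3$ is linearly dependent. A concrete counterexample with $n=4$ is $f_-=D_{124}=\det(\bm x_1,\bm x_2,\bm x_4)$: it is a perfectly good anti-invariant, it does not vanish at $\bm x_3=0$ with $\bm x_1,\bm x_2,\bm x_4$ independent, and $D_{124}/D_{123}$ is not a polynomial. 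So an anti-invariant need not be divisible by any single $D_{ijk}$; what is true, and what the lemma needs, is the module statement $f_-=\sum_{i<j<k} q_{ijk}\,D_{ijk}$ with $O(3)$-invariant coefficients $q_{ijk}$, and your division argument cannot produce it. The standard repair is Weyl's own route: polarize $f_-$ to a multilinear invariant, use the multilinear first fundamental theorem for $SO(3)$ (every multilinear invariant of vectors is a linear combination of products of dot products with at most one determinant factor), and then restitute; alternatively one can argue through the isotropic-tensor characterization (sums of products of $\delta$'s with at most one $\epsilon$), though note that within this paper that characterization (Lemma 4.2) is itself deduced from the present lemma, so invoking it here would be circular in the paper's logic. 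Finally, the degenerate cases $n\le 2$ (no triples $D_{ijk}$ exist) should be dispatched separately by observing that there $f_-\equiv 0$, since any one or two vectors are fixed by a suitable reflection.
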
\label{thm:vector}
This lemma not only provides a finite set of generators of 3D vector inputs, but can also be used to greatly simplify the structure of a symmetric tensor, which is useful in treating the other  input form that we discuss later.

\begin{lemma}
    Each $SO(3)$ symmetric tensor $T \in (\mathbb{R}^3)^{\otimes n} $ can be constructed by identity tensor $\delta_{ij}$ and Levi-Civita tensor $\epsilon_{ijk}$. That is to say, each $SO(3)$ symmetric tensor $T$ is a linear combination of tensor networks, each of which is the tensor product of $\delta_{ij}$ and $\epsilon_{ijk}$. 
    
    Furthermore, if the rank of $T$ is even, then $T$ is a linear combination of tensor networks, each of which is the tensor product of $\delta_{ij}$. Otherwise, $T$ is a linear combination of tensor networks, each of which is the tensor product of $\delta_{ij}$ together with exactly one $\epsilon_{ijk}$.
    \label{thm:sym_ten}
\end{lemma}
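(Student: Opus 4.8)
The plan is to reduce the statement to Weyl's vector‑invariant result, Lemma~\ref{thm:vector}, by a multilinearization argument. A rank‑$n$ tensor $T\in\mathbb R^{3\times\cdots\times3}$ carries the same information as the multilinear form
\[
  \phi(\bm x_1,\dots,\bm x_n)=\sum_{i_1,\dots,i_n}T_{i_1\dots i_n}\,(x_1)_{i_1}\cdots(x_n)_{i_n}
\]
on $n$ copies of $\mathbb R^3$, and the symmetry condition \eqref{eqn:sym_cond}, with every $\rho_i$ the defining representation of $SO(3)$, says exactly that $\phi$ is an $SO(3)$‑invariant polynomial in the $3n$ coordinates. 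Hence $\phi=q\big(\{\bm x_i\cdot\bm x_j,\ (\bm x_i\times\bm x_j)\cdot\bm x_k\}\big)$ for some polynomial $q$.

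Next I would isolate the part of this expression that is linear in each $\bm x_k$. Substituting $\bm x_k\mapsto t_k\bm x_k$ multiplies $\phi$ by $t_1\cdots t_n$, multiplies each generator $\bm x_i\cdot\bm x_j$ by $t_it_j$, and multiplies each generator $(\bm x_i\times\bm x_j)\cdot\bm x_k$ by $t_it_jt_k$; equating the coefficient of the monomial $t_1\cdots t_n$ on the two sides of the resulting polynomial identity in $t_1,\dots,t_n$ shows that $\phi$ is a linear combination of products of factors $\bm x_i\cdot\bm x_j$ and $(\bm x_i\times\bm x_j)\cdot\bm x_k$ in which the labels $1,\dots,n$ each occur exactly once across all the factors (so every factor has pairwise distinct labels, hence does not vanish). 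Writing $\bm x_i\cdot\bm x_j=\sum_{a,b}\delta_{ab}(x_i)_a(x_j)_b$ and $(\bm x_i\times\bm x_j)\cdot\bm x_k=\sum_{a,b,c}\epsilon_{abc}(x_i)_a(x_j)_b(x_k)_c$ and reading off the coefficient of $(x_1)_{i_1}\cdots(x_n)_{i_n}$, one gets that $T_{i_1\dots i_n}$ is a linear combination of tensors, each a tensor product of copies of $\delta$ and $\epsilon$ with the $n$ free indices distributed, in some order, over the slots of the factors. In the diagrammatic language this is simply the statement that an $SO(3)$‑symmetric tensor with $3$‑dimensional legs is a sum of networks built only from $\delta$‑bonds and $\epsilon$‑nodes.

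For the parity claim I would then cut down the number of $\epsilon$ factors using the contraction identity $\epsilon_{abc}\,\epsilon_{a'b'c'}=\det(\delta_{xy})$, the $3\times3$ determinant with entries $\delta_{xy}$ for $x\in\{a,b,c\}$, $y\in\{a',b',c'\}$. Applying it to any pair of $\epsilon$'s in a term rewrites that term as a sum of terms with two fewer $\epsilon$'s, so iterating lowers the $\epsilon$‑count by $2$ at each step. Since a tensor product of $k$ copies of $\epsilon$ and $m$ copies of $\delta$ has rank $3k+2m=n$, the count $k$ is congruent to $n$ modulo $2$, so the reduction terminates with $k=0$ when $n$ is even and $k=1$ when $n$ is odd, which is the second assertion.

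I expect the only delicate point to be the multilinearization: because the generators in Lemma~\ref{thm:vector} are not algebraically independent, one cannot literally ``take the multilinear part of $q$'', so the rescaling and coefficient‑extraction step is what makes the argument rigorous. The translation back into $\delta$'s and $\epsilon$'s, the bookkeeping of slot permutations, and the parity count are then routine, the last being cleanest to present diagrammatically.
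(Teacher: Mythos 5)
Your proposal is correct and follows essentially the same route as the paper's proof: associate to $T$ the multilinear invariant polynomial, invoke Lemma~\ref{thm:vector}, extract the multilinear part to express $T$ as a linear combination of tensor products of $\delta_{ij}$ and $\epsilon_{ijk}$, and then use the $\epsilon_{abc}\epsilon_{a'b'c'}$-to-$\delta$ identity together with the rank-parity count $3k+2m=n$ to reduce to zero or one $\epsilon$. Your rescaling-and-coefficient-extraction step is just a more explicit version of the paper's ``take derivatives with respect to $\bm x_1,\dots,\bm x_n$'' step, so no substantive difference remains.
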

We give the proof in Appendix
\ref{sec:proof_3.2}. The symmetric tensor $\delta_{ij}$ and $\epsilon_{ijk}$ can be represented by
\begin{equation}
\adjustbox{valign=c}{
    \begin{tikzpicture}[baseline={([yshift=-0.5ex]current bounding box.center)}] 
        \node[anchor=east] at (0, 0.25) {$\delta_{ij} :$};
        \draw (0.2, 0) arc (180:0:0.5cm);
        \begin{scope}[xshift=3.0cm] 
            \node[anchor=east] at (0, 0.25) {$\epsilon_{ijk} :$};
            \draw (0.2, 0) arc (180:0:0.5cm);
            \coordinate (bottom_center) at (0.7, 0);  
            \coordinate (top_center) at (0.7, 0.5);    
            \draw (bottom_center) -- (top_center);     
            \fill (top_center) circle (2pt);           
        \end{scope}       
    \end{tikzpicture}
}
\end{equation}

It should be noted that these $SO(3)$ symmetric tensors whose indices take the 3D representation, along with their characteristic properties, are also known as isotropic tensors \citep{Jeffreys_1973}  in the classical invariance theory. 

\subsection{Cartesian tensor inputs} \label{Cartesian_ten}
Next, let's consider a more general case, where the input $\bm x_1,\dots,\bm x_n$ are Cartesian tensors with various ranks $r_i$, i.e. $\bm x_i \in (\mathbb{R}^3)^{\otimes r_i}$. For this case, we can construct the tensor network generators in the following way, 

\begin{proposition} \label{thm:tensor}
    Let $\bm x_1,\dots,\bm x_n$ be input Cartesian tensors with $\bm x_i \in (\mathbb{R}^3)^{\otimes r_i}$, and let $V$ denote the input space. Then $\mathbb R[V]^{SO(3)}$ is generated by the contraction of connected tensor network formed by $\bm x_1,\dots,\bm x_n$ (multiplicity is allowed), together with at most one Levi-Civita tensor $\epsilon_{ijk}$. 
\end{proposition}
The proof of Proposition \ref{thm:tensor} is given in Appendix \ref{sec:proof_3.3}. When all inputs $\bm x_1,\dots,\bm x_n$ are vectors, the tensor network generators are $\{\bm x_i\cdot \bm x_j,( \bm x_i\times  \bm x_j)\cdot \bm x_k\} $, which is the same as in Lemma \ref{thm:vector}. If the inputs $\bm x_1,\dots,\bm x_n$ contain tensors of higher rank, the set of possible tensor network generators may not be finite. By Hilbert's finiteness theorem \citep{hilbert1890ueber,hilbert1893}, there exists a finite generator set $(g_1,\dots,g_m)$, which can be obtained from a finite subset of our tensor network generators. That means only a finite subset of tensor network generators suffices to generate the whole $\mathbb R[V]^{SO(3)}$. However, it's very difficult to determine the exact subset. In practice, we can take the subset of tensor network polynomials with finite degree, which gives a finite-degree approximation to the minimal set of generators. We also noted that recent work \citep{gregory2024learningequivarianttensorfunctions} constructs the equivariant polynomial function with Cartesian tensor inputs in a similar way, which uses the group averaging property of the orthogonal groups \citep{Jeffreys_1973}. 

\subsection{Spherical tensor inputs}
In this section, we consider inputs $\bm x_1,\dots,\bm x_n$ of irreducible representation spaces. Irreducible representations of the $SO(3)$ group can be labeled by a non-negative integer  $l$. The representation $l$ is $(2l+1)$-dimensional, and the common 3D representation is just the representation $1$. Let each $\bm x_i\in V_i$ where $V_i$ is a real linear space of an irreducible $SO(3)$ representation $l_i$. In the following, we describe how to embed each spherical tensor $\bm x_i$ to space $(1)^{\otimes l_i}$

A Cartesian tensor of rank-$l$ whose indices take the 3D representation of $SO(3)$ is a reducible representation and can be reduced as follows.
\begin{equation}
    (1)^{\otimes l}=(l)\oplus(l-1)^{d_{l,l-1}}\oplus\cdots\oplus(j)^{d_{l,j}}\oplus\cdots\oplus(0)^{d_{l,0}}
\end{equation}
where $(1)^{\otimes l}$ denotes the $l$-fold tensor-product representation space i.e. $(1)^{\otimes l} = \underbrace{(1)\otimes\cdots\otimes(1)}_{l}$ and $(j)^{ d_{l,j}} = \underbrace{(j)\oplus\cdots\oplus(j)}_{d_{l,j}}$. The value of $d_{l,j}$ is given in Appendix \ref{sec:product_dec}.

For each representation $l$, we can define a tensor $P_l$ which maps from the  space $(1)^{\otimes l}$ to the irreducible
representation space $l$. An explicit construction of $P_l$ can be obtained by applying the lowering operator to the highest weight vector in $( 1)^{\otimes l}$; see Appendix \ref{sec:projector_form} for details. The tensor diagram of the tensor $P_l$ is illustrated as follows:
\begin{equation}
    \adjustbox{valign=c}{
\begin{tikzpicture}[line cap=round,line join=round]
  \node[draw,circle,minimum size=10mm] (P) at (0,0) {$P_l$};
  \draw (-1.5,0) -- node[above] {$l$} (P.west);
  \coordinate (Rtop) at (2.2, 0.85);
  \coordinate (Rmid) at (2.2, 0.35);
  \coordinate (Rbot) at (2.2,-0.85);
  \draw (P.25)  to[out=25,in=180]  (Rtop);
  \draw (P.0)   to[out=10,in=180]  (Rmid);
  \draw (P.-25) to[out=-25,in=180] (Rbot);
  \node[above] at ($(Rtop)+(0,0.05)$) {$1$};
  \node[below] at ($(Rbot)+(0,-0.05)$) {$1$};
  \node at (1.55,0) {$\vdots$};
  \node[anchor=west] at (2.45,0) {$\left.\vphantom{\rule{0pt}{1.5cm}}\right\}$};
  \node[anchor=west] at (2.85,0) {$l$};
\end{tikzpicture}
}
\end{equation}
where the numbers on the edges represent the irreducible representation type of the space associated with that index. 

\begin{lemma} \label{P_l_proper}
    The tensor $P_l$ have the following properties:
\begin{enumerate}
     \item $P_l$ are $SO(3)$ symmetric tensors.

     \item  $P_l$ are isometry, i.e. $P_l P_l^\dagger = I$ 

     \item The $l$ indices corresponding to space $(1)^{\otimes l}$ of $P_l$ are traceless, i.e., $[P_l]^{m}_{ \dots, a_k, \dots a_k,\dots} =0$ \footnote{We adopt the Einstein summation convention: repeated indices are summed over their full range.}

     \item Contracting any two of the $l$ indices corresponding to space$(1)^{\otimes l}$ of $P_l$ with the Levi–Civita tensor yields zero, i.e., $[P_l]^{m}_{ \dots, a_i, \dots a_j,\dots} \epsilon_{a_ia_jk}=0$

     \item The  $l$ indices corresponding to space $(1)^{\otimes l}$ of $P_l$ is permutation invariant, i.e., $[P_l]^{m}_{ \dots, a_i, \dots a_j,\dots} =[P_l]^{m}_{ \dots, a_j, \dots a_i,\dots}$
\end{enumerate}
\end{lemma}
We give the proof in Appendix \ref{sec:projector_form}.

Let $\bm x$ be a variable of representation $l$, we define $P_l(\bm x)$ to be
\begin{equation}
\adjustbox{valign=c}{
        \begin{tikzpicture}
            \draw (-1,0) node[draw,circle] (P1) {$P_l(\bm x)$};
            \draw (P1) to[out=45,in=180] (0,0.7);
            \draw (P1) to[out=30,in=180] (0,0.4);
            \draw (P1) to[out=-45,in=-180] (0,-0.7);
            \draw (0,0) node{$\vdots$};
        \end{tikzpicture}
        }=
    \adjustbox{valign=c}{
        \begin{tikzpicture}
            \draw (-2,0) node[draw,circle] (x) {$\bm x$};
            \draw (-1,0) node[draw,circle] (P1) {$P_l$};
            \draw (P1) to[out=45,in=180] (0,0.7);
            \draw (P1) to[out=30,in=180] (0,0.4);
            \draw (P1) to[out=-45,in=-180] (0,-0.7);
            \draw (x) -- (P1);
            \draw (0,0) node{$\vdots$};
        \end{tikzpicture}
        }.
\end{equation}
Then $P_l(x)$ becomes a tensor in the space $(1)^{\otimes l}$, which can be used to construct tensor network generators.

\begin{proposition}\label{Spher_gene}
    Let $\bm x_1,\dots,\bm x_n$ be input variables that take the irreducible representation $l_1,\dots,l_n$ of SO(3). Let $V$ be the input space. Then $\mathbb R[V]^{SO(3)}$ is generated by the contraction of connected tensor network formed by $P_{l_1}(\bm x_1),\dots,P_{l_n}(\bm x_n)$ (multiplicity is allowed), together with at most one Levi-Civita tensor $\epsilon_{ijk}$. \label{thm:sub_tensor}
\end{proposition}
The proof is given in Appendix \ref{sec:proof_3.4}.

\subsection{Constructing continuous invariant functions}
Using the tensor network generator we constructed and Stone-Weierstrass Theorem, we can always express the continuous invariant functions in following form:
\begin{proposition}
   Let $\bm x_1,\dots,\bm x_n$ be input variables in space $V_1,\dots,V_n$, any continuous SO(3) invariant function $f :\bigoplus_j V_j\rightarrow \mathbb R$ can be approximately expressed as \footnote{The notion of approximation used this paper is uniform approximation on compact sets: for any compact set $K\subset \bigoplus_{j} V_j$ and any $\eta>0$, we have  $\sup_{x\in K}\big|f(\bm x_1,\dots,\bm x_n)-\tilde{f} (\bm x_1,\dots,\bm x_n)\big|<\eta$.}
   \begin{equation}
       \tilde{f} (\bm x_1,\dots,\bm x_n) =q(g_1,\dots,g_k),
   \end{equation}
   where $g_1,\dots,g_k \in \mathbb R[V]^{SO(3)}$ are tensor network generators of $\bm x_1,\dots ,\bm x_n$, and $q$ is the ordinary polynomial.
   \label{cont_invar_fun}
\end{proposition}
See Appendix \ref{sec:proof_cont_invar_fun} for proof. Since each $g_i\in\mathbb{R}[V]^{SO(3)}$ is exactly invariant, the composed approximant $\tilde f(\bm x_1,\dots,\bm x_n)$ is itself exactly $SO(3)$ invariant (not approximately invariant). In practice, we can use a general neural network that does not incorporate $SO(3)$ symmetry to approximate $q$. Therefore, any continuous $SO(3)$ invariant functions can be approximated by composing such a network with tensor-network generators; see Fig. \ref{fig:inv_ansatz}(a).

\begin{figure}[tb!]
    \centering
    \includegraphics[width=0.7\linewidth]{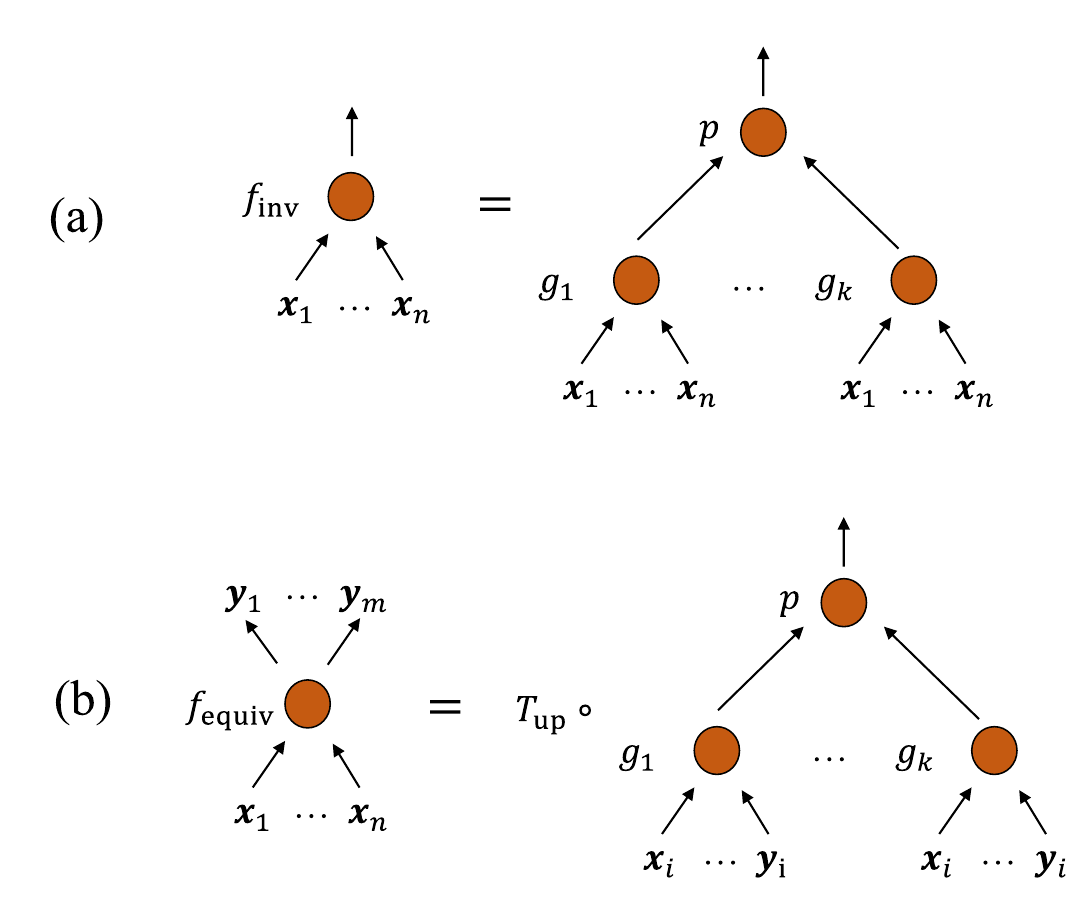}
    \caption{(a) The general form of continuous $SO(3)$ invariant  functions $f_{\rm inv}$ by composing a general neural network $p$ with $g_1,\dots,g_k$, which are tensor network generators. (b) The general form of $SO(3)$ equivariant functions by composing a general neural network $p$ with $g_1,\dots,g_k$, which are tensor network generators, and then applying $T_{\rm up}$.}
    \label{fig:inv_ansatz}
\end{figure}

\section{Generalization to equivariant functions}
Having established a general construction for $SO(3)$ invariant functions, we now extend the framework to  $SO(3)$ equivariant mappings. In fact, we can always construct an equivariant function from an invariant function \citep{blum2023machine}. We have 
\begin{lemma}\label{th_equ_f_in}
Given an invariant function $f: \bigoplus_j V_j \oplus\bigoplus_i U_i \rightarrow F$ with input $\bm x_1,\dots,\bm x_n,\bm y_1,\dots,\bm y_m$ in space $V_1,\dots,V_n,U_1,\dots,U_m$, we can always construct an equivariant function $T_{\rm up}(f):\bigoplus_j V_j\rightarrow \bigoplus_i \bar U_i$, where $G$ acts on $\bar U_i$ by the dual representation of $U_i$, by defining
\begin{equation}
    T_{\rm up}(f)^{i}(\bm x_1, \cdots, \bm x_n)  = \left. \frac{\partial f(\bm x_1, \cdots, \bm x_n, \bm y_1,\dots,\bm y_m)}{\partial \bm y_i} \right|_{\bm y_1=\cdots=\bm y_m=0}    
\end{equation}\label{eqn:inv2equiv}
where we have chosen a natural set of basis for $U_i$ and the corresponding dual basis for $\bar U_i$. Moreover, any equivariant function can be obtained in this way.
\end{lemma}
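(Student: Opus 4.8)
The plan is to verify the three claims in turn: (i) $T_{\rm up}(f)$ is well-defined as a map into $\bigoplus_i \bar U_i$; (ii) it is equivariant; (iii) every equivariant function arises this way. For (i), note that $\partial f/\partial \bm y_i$ evaluated at $\bm y=0$ is, once a basis of $U_i$ is fixed, a tuple indexed by that basis, i.e. an element of the dual space $\bar U_i$; so $T_{\rm up}(f)$ is a genuine function $\bigoplus_i V_i \to \bigoplus_i \bar U_i$, and regularity (polynomial, smooth) is inherited from $f$.

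For (ii), the key computation is to differentiate the invariance identity $f(g\cdot\bm x_1,\dots,g\cdot\bm x_n,g\cdot\bm y_1,\dots,g\cdot\bm y_m) = f(\bm x_1,\dots,\bm x_n,\bm y_1,\dots,\bm y_m)$ with respect to $\bm y_i$ and then set all $\bm y_j=0$. On the left, by the chain rule, $\partial/\partial \bm y_i$ of $f(\dots,g\cdot\bm y_i,\dots)$ picks up the representation matrix $\rho_{U_i}(g)$ acting on the $\bm y_i$ slot; on the right we get $\partial f/\partial \bm y_i$ evaluated at the transformed point $(g\cdot\bm x_1,\dots,g\cdot\bm x_n,0,\dots,0)$ — here I use that $g$ fixes $0$, so the evaluation locus $\bm y=0$ is preserved. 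Rearranging gives
\begin{equation}
  T_{\rm up}(f)^i(g\cdot\bm x_1,\dots,g\cdot\bm x_n) = \rho_{U_i}(g)^{-T}\, T_{\rm up}(f)^i(\bm x_1,\dots,\bm x_n),
\end{equation}
and $\rho_{U_i}(g)^{-T}$ is exactly the dual representation on $\bar U_i$, which is the claimed equivariance. I would be careful about index placement — whether the derivative transforms by $\rho$ or by $\rho^{-T}$ — and this is precisely why the target must be the dual $\bar U_i$ rather than $U_i$ itself; for orthogonal representations (the case $G=SO(3)$ with real orthogonal reps, which is the paper's setting) the dual is isomorphic to the original, so the distinction is cosmetic there but worth stating cleanly.

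For (iii), the surjectivity claim, given an equivariant $h:\bigoplus_i V_i\to\bigoplus_i U_i$ I would build the invariant function $f(\bm x,\bm y) = \sum_i \langle \bm y_i, h^i(\bm x)\rangle$, where $\langle\cdot,\cdot\rangle$ is the natural pairing between $\bar U_i$ and $U_i$ (so really $\bm y_i$ should be taken in $\bar U_i$, matching the dualization in the statement — or, using orthogonality of the representations, an invariant inner product on $U_i$). Invariance of $f$ follows from equivariance of $h$ together with the fact that the pairing intertwines a representation with its dual. Then $\partial f/\partial \bm y_i |_{\bm y=0} = h^i(\bm x)$, so $T_{\rm up}(f) = h$, establishing surjectivity.

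The main obstacle is bookkeeping around dual spaces and representations rather than any deep idea: one must be consistent about whether the $\bm y_i$ live in $U_i$ or $\bar U_i$, about which representation the gradient transforms under, and about what ``natural basis / dual basis'' means so that the derivative really is a vector in $\bar U_i$. A secondary point to address is the domain on which the differentiation is licit — if $f$ is only assumed to be an invariant polynomial (the setting relevant to the tensor network generators) everything is purely algebraic and there is nothing to worry about; if $f$ is a general smooth invariant function one should note that $C^\infty$-dependence on $\bm y$ suffices for the partial derivative at $\bm y=0$ to exist and to inherit smoothness in $\bm x$. I would phrase the argument so it covers the polynomial case verbatim and remark that the smooth case is identical.
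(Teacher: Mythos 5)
Your proposal is correct and follows essentially the same route as the paper: equivariance is obtained by differentiating the invariance identity $f(g\cdot\bm x,g\cdot\bm y)=f(\bm x,\bm y)$ in $\bm y_i$ via the chain rule (yielding the dual representation $\rho(g)^{-T}=\bar\rho(g)$ exactly as in the paper's index computation), and surjectivity via the pairing $f(\bm x,\bm y)=\sum_i\langle \bm y_i,h^i(\bm x)\rangle$ is precisely the paper's $T_{\rm down}$ construction with $T_{\rm up}\circ T_{\rm down}=\mathrm{id}$. Your extra remarks on dual-space bookkeeping and on the self-duality of real $SO(3)$ representations match the paper's own comment following its proof.
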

We give the proof of the above Lemma and more details about obtaining equivariant functions from the invariant functions in Appendix \ref{sec:equ_from_in}. 

From the Proposition \ref{cont_invar_fun} and Lemma \ref{th_equ_f_in}, we can construct equivariant functions from invariant functions constructed by tensor network generators, as shown in Fig. \ref{fig:inv_ansatz}(b).
We can obtain the following proposition (We give the proof of the proposition in Appendix \ref{proof_equ_tns}.),
\begin{proposition} \label{equ_tns}
    Let $\bm x_1,\dots,\bm x_n$ be input variables in space $V_1,\dots,V_n$ and $\bm y_1,\dots,\bm y_m$ be output variables in space $U_1,\dots,U_m$, any continuous $SO(3)$ equivariant function $h :\bigoplus_j V_j\rightarrow \bigoplus_i  U_i$ can be approximately expressed as
    \begin{equation}\label{eq:equi_appro}
        \tilde{h}^i(\bm x_1,\dots,\bm x_n) = \sum_j ^{N_i}q^i_{j}(g_1,\dots,g_k) t^i_j(\bm x_1,\dots,\bm x_n)
    \end{equation}
    where $q^i_{j}(\cdot)$ are the ordinary polynomials, $g_1,\dots,g_k \in \mathbb R[V]^{SO(3)}$ are tensor network generators of $\bm x_1,\dots ,\bm x_n$, and $t_j^i(\bm x_1,\dots,\bm x_n)$ are tensor networks (labeled by $j$) which are obtained by removing the output variables $\bm y_i$ from the tensor network generators of $\bm x_1,\dots ,\bm x_n,  \bm y_i$, in which the $\bm y_i$ variable appears exactly once.
\end{proposition}
The composed approximant $\tilde h(\bm x_1,\dots,\bm x_n)$ is itself exactly $SO(3)$ equivariant (not approximately equivariant). Similarly, $q^i_{j}(\cdot)$ can be parameterized by a generic neural network. In this way, we decouple the symmetry constraints from the neural network architecture. 

\section{Equivariant functions with vector inputs}
In practice, a common learning scenario involves using the 3D coordinates of a set of particles (atoms) as inputs to predict their physical properties. These targets are not only limited to scalars; they often possess geometric significance and must transform consistently under rotation. Typical examples include the polarizability tensor and stress tensor, which are naturally represented as Cartesian tensors, as well as quantities from electronic-structure such as Hamiltonians and density matrices, which can be represented using spherical-tensor features. Motivated by these applications, this section focuses on equivariant maps with vector inputs and Cartesian or spherical tensor outputs, and derives the general form of these maps based on the framework we develop above.

Actually, for equivariant functions with vector inputs, the family of tensor networks $t_j$ in Eq. (\ref{eq:equi_appro}) required to represent general continuous equivariant maps can always be chosen to be finite, since there are only finitely many connected tensor networks built from the input vectors and fixed output tensors. 
\begin{lemma}
    Let $\bm x_1,\ldots,\bm x_n$ be input variables in spaces $V_1,\ldots,V_n$ and $\bm y_1,\ldots,\bm y_m$ be output variables in spaces $U_1,\ldots,U_m$, where each $V_j = \mathbb{R}^3$ and each $U_i$ is either a Cartesian tensor space or a spherical tensor space. Any continuous $SO(3)$ equivariant function $h :\bigoplus_j V_j\rightarrow \bigoplus_i U_i$ can be approximated by Eq. (\ref{eq:equi_appro}) with finite integer $N_i$  of tensor network terms for each output channel $i\in{1,\ldots,m}$.
\end{lemma}

Propositions \ref{Cartesian_eq} and \ref{spher_eq} below provide explicit forms of equivariant functions with a single Cartesian tensor output and functions with a single spherical tensor output, respectively.
\begin{proposition}\label{Cartesian_eq}
   Any continuous $SO(3)$ equivariant function $h:(\mathbb{R}^3)^n \to (\mathbb{R}^3)^{\otimes r}$ with vector inputs $\bm x_1,\dots,\bm x_n$ and  Cartesian tensor output of rank $r$ can  be approximately expressed as
\begin{equation}
\begin{split}
     \tilde{h}(\bm x_1,\dots,\bm x_n) =
     \sum_i^{N} q_i(g_1,\dots,g_m)t_i(\bm x_1,\dots,\bm x_n)
\end{split}
\end{equation} 
where $q_i$ are ordinary polynomials,   
$g_1,\dots,g_m \in \{\bm x_i\cdot \bm x_j,(\bm x_i\times \bm x_j)\cdot \bm x_k\} $ and $t_i(\bm x_1,\dots,\bm x_n)$ are tensor networks with $r$ open legs consisting of $\bm x_1,\dots,\bm x_n$ (multiplicity is allowed), $\delta_{ij}$ and at most one Levi-Civita tensor $\epsilon_{ijk}$.
\end{proposition}

Thanks to the nice properties of tensor $P_l$ of lemma \ref{P_l_proper}, we can get more concise form of equivariant functions with a single spherical tensor output; see the proof of Proposition \ref{spher_eq} in Appendix \ref{Proof_spher_eq}.
\begin{proposition}\label{spher_eq}
Any continuous $SO(3)$ equivariant function $h:(\mathbb{R}^3)^n \to \mathbb{R}^{2l+1}$ with vector inputs $\bm x_1,\dots,\bm x_n$ and  spherical tensor output of type $l$ can  be approximately expressed as
\begin{equation}
\begin{split}
     \tilde{h}(\bm x_1,\dots,\bm x_n) =
     \sum_i^{N} q_i(g_1,\dots,g_m)t_i(\bm x_1,\dots,\bm x_n)
\end{split}
\end{equation}
where $q_i$ are ordinary polynomials,   
$g_1,\dots,g_m \in \{\bm x_i\cdot \bm x_j,(\bm x_i\times \bm x_j)\cdot \bm x_k\} $ and  $t_i(\bm x_1,\dots,\bm x_n)$ are connected tensor network consisting of $\bm x_1,\dots,\bm x_n$, a single $P_l$ and at most one Levi-Civita tensor $\epsilon_{ijk}$ in following form,
\begin{equation}\label{eq:connect_spher}
\raisebox{0.4ex}{
\adjustbox{valign=c}{
\begin{tikzpicture}[
  xnode/.style={draw,circle,inner sep=1pt,minimum size=4mm},
  pnode/.style={draw,circle,inner sep=1pt,minimum size=7mm},
  jdot/.style={circle,fill,inner sep=0pt,minimum size=1.2mm}
]
  \def\dy{0.85}
  \begin{scope}
    \node[pnode] (P1) at (0,0) {$P_l$};
    \draw (-0.85,0) -- (P1.west);
    \def\xL{1.35} 
    \node[xnode] (Ltop) at (\xL,  1.35*\dy) {};
    \node[xnode] (Lmid) at (\xL,  0)       {};
    \node[xnode] (Lbot) at (\xL, -1.35*\dy) {};
    \node at (\xL,  0.65*\dy) {$\vdots$};
    \node at (\xL, -0.65*\dy) {$\vdots$};
    \draw (-0.85,0) -- (P1.180);
    \draw (P1.35)  -- (Ltop.west);
    \draw (P1.0)   -- (Lmid.west);
    \draw (P1.-35) -- (Lbot.west);
  \end{scope}
  \node at (2.25,0) {\textit{or}};

\begin{scope}[shift={(3.75,0)}]
  \node[pnode] (P2) at (0,0) {$P_l$};
  \draw (-0.85,0) -- (P2.180);
  \def\xR{1.35}
  \def\xJ{0.70}
  \node[xnode] (R1) at (\xR,  1.35*\dy) {};
  \node[xnode] (R2) at (\xR,  0.45*\dy) {};
  \node[xnode] (R3) at (\xR, -0.45*\dy) {};
  \node[xnode] (R4) at (\xR, -1.35*\dy) {};
  \node at (\xR,  0.96*\dy) {$\vdots$};
  \node at (\xR, -0.85*\dy) {$\vdots$};
  \node[jdot] (J) at (\xJ,0) {};
  \draw (P2.35)  -- (R1.west);
  \draw (P2.0)   -- (J);     
  \draw (P2.-35) -- (R4.west);  
  \draw (J) -- (R2.west);
  \draw (J) -- (R3.west);
\end{scope}

\end{tikzpicture}}}
\end{equation}

where $\adjustbox{valign=c}{
\begin{tikzpicture}
  \node[draw,circle,inner sep=1pt,minimum size=4mm] (c) at (0,0) {};
  \draw (-.5,0) -- (c.west);
\end{tikzpicture}
} \in \{\bm x_1,\dots,\bm x_n\}$ and multiplicity is allowed.
\end{proposition}

It is also straightforward to extend the above construction to the $O(3)$ equivariant setting.
Since $O(3) \cong SO(3) \times I$, where $I = \{e, i\}$ is the inversion group. The $O(3)$ equivariant functions can be obtained by averaging $SO(3)$ equivariant functions $\tilde h$ over the two elements of $I$. Therefore, we have the following corollary.
\begin{corollary}
    Consider a continuous $O(3)$ equivariant function $h:(\mathbb{R}^3)^n \to V$ with vector inputs $\bm x_1,\dots,\bm x_n$, where the output $V$ is either a single Cartesian tensor of rank $r$, or a single spherical tensor of type $l$ and in both cases has parity $p\in \{+1,-1\}$ \footnote{True tensors have $p=+1$, pseudotensors have $p=-1$}. Then $h$ can be approximated by
\begin{equation}
        \tilde h^{O(3)}(\bm x_1,\ldots,\bm x_n)
= \frac12\Big(\tilde h(\bm x_1,\dots,\bm x_n)+\rho \tilde h(-\bm x_1,\ldots,-\bm x_n)\Big),
\end{equation}
where
\begin{equation}
\rho = p(-1)^{k},
k=
\begin{cases}
r, & \text{ Cartesian tensor},\\
l, & \text{ spherical tensor}.
\end{cases}
\end{equation}
\end{corollary}

\section{Constructing equivariant operations for geometric graph neural network}
In this section, we will show how common equivariant primitives in
geometric GNNs arise within our framework and how to use the framework to design equivariant operations.

Geometric GNNs are built on graph-structured data with the 3D geometric information. For the node feature $\mathbf{h}_{i}$  of node $i$ and $\mathbf{h}_{j}$ of its neighbour $j \in\ N(i)$ in layer $l$, the interaction message $\mathbf{m}_{ij}^{l+1}$ for them is $\mathbf{m}_{ij}^{l+1}=f_{m}\left(\mathbf{h}_{i}^{l},\mathbf{h}_{j}^{l}\right)$, where the map $f_{m}$ is a learnable function. Then the interaction message $\mathbf{m}_{ij}$ for all the neighbour $j \in\ N(i)$ are aggregated by a permutation invariant function $\bigoplus_{j \in \mathcal{N}(i)}$, such as sum and mean, which is used to update the node feature $\mathbf{h}_{i}$  of node $i$ according to another learnable function  $\mathbf{h}_{i}^{l+1}=f_{u}\left(\mathbf{h}_{i}^{l}, \bigoplus_{j \in \mathcal{N}(i)} \mathbf{m}_{i j}^{l}\right)$. For the equivariant geometric GNNs, the functions $f_{m}$ and $f_{u}$ need to be $SO(3)$ equivariant functions. Many studies on equivariant geometric GNNs focus on designing novel equivariant functions $f_{m}$ and $f_{u}$. The feature $\mathbf{h}_{i}$ and message $\mathbf{m}_{ij}$ can be cartesian tensor \citep{Wang2024} or spherical tensor \citep{thomas2018tensorfieldnetworksrotation,NEURIPS2018_488e4104}.

In the following, we show how to construct the functions $f_{m}( \mathbf{h}_i, \mathbf{h}_j)$. In the diagrams below, nodes represent the input/output quantities, and connecting lines represent the contraction of indices with the identity tensor $\delta_{ij}$ or the Levi-Civita tensor $\epsilon_{ijk}$. Specifically, the red lines denote the output space corresponding to the indices of the message $\mathbf{m}_{ij}$ before the final contraction to a scalar. By taking the derivative with respect to $\mathbf{m}_{ij}$ (removing the red lines), we obtain the equivariant functions. For simplicity, we restrict our attention to linear functions—specifically, tensor networks in which $\mathbf{h}_i$ and $\mathbf{h}_j$ appear at most once. These represent the most common equivariant operations in geometric GNNs. More complicated functions can be constructed in the similar way.

\subsection{Constructing equivariant operations for Cartesian tensor feature}
For Cartesian tensor feature $\mathbf{h}$ and message $\mathbf{m}_{ij}$, we consider the tensor rank to be 1 (vector) and 2 (matrix). 

\textbf{Vector input and vector output:}
We set $\mathbf{h}_i = \mathbf{u}$, $\mathbf{h}_j = \mathbf{v}$, $\mathbf{m}_{ij} = \mathbf{w}$ and $\mathbf{u}, \mathbf{v}, \mathbf{w}\in \mathbb{R} ^{3} $. According to Proposition \ref{Cartesian_eq}, the equivariant polynomials $t(\bm u , \bm v)$ are tensor networks with 1 open legs consist of $\bm u , \bm v$, $\delta_{ij}$ and at most one $\epsilon_{ijk}$. So the possible diagrams are
\begin{equation}
    \adjustbox{valign=c}{
    \begin{tikzpicture}
    [yscale=0.7,xscale=0.7]
        \draw (0,0) node[draw,circle, minimum size=0.4cm] (n1) {};
        \draw (n1) -- ++(0,0.8) arc (180:0:0.6);
        
    \end{tikzpicture}
}
\qquad
\adjustbox{valign=c}{
    \begin{tikzpicture}
    [yscale=0.7,xscale=0.7]
        \draw (-1.2,0) node[draw,circle, minimum size=0.4cm] (n1) {};
        \draw (0,0)    node[draw,circle, minimum size=0.4cm] (n2) {};
        
        \draw (n1) -- ++(0,0.8) arc (180:0:1.2 and 0.6);
        
        \draw (n2) -- (0, 1.4);
        \fill (0, 1.4) circle (2pt);
    \end{tikzpicture}
}
\end{equation}
The first diagram is $\mathbf{u}$ or $\mathbf{v}$ and the second diagram is  $\mathbf{u}\times  \mathbf{v}$. These operations are typically the equivariant operations used in the EGNN-style \cite{pmlr-v139-satorras21a,pmlr-v139-schutt21a,9711441,le2022equivariantgraphattentionnetworks}. 

\textbf{Matrix input and matrix output:}
For matrix features, we can set $\mathbf{h}_i = \mathbf{A}$, $\mathbf{h}_j = \mathbf{B}$, $\mathbf{m}_{ij} = \mathbf{C}$ and $\mathbf{A}, \mathbf{B}, \mathbf{C} \in \mathbb{R} ^{3 \times 3}$. According to Proposition \ref{equ_tns}, we first construct the  tensor network generators of $\mathbf{A}, \mathbf{B}, \mathbf{C}$ that output $\mathbf{C}$ appears exactly once. As we stated above, we just consider linear maps for simplicity.  So the possible diagrams are
\begin{equation}
    \tikzset{
        block/.style={draw,   rounded corners=2pt, minimum width=0.6cm, minimum height=0.2cm}
    }
    \adjustbox{valign=c}{
    \begin{tikzpicture}[yscale=0.7,xscale=0.7]
        \node[block] (n1) at (0,0) {};
        \draw[  red] ($(n1.north)+(-0.2,0)$) -- ++(0,0.6) coordinate (left_top);
        \draw (left_top) arc (180:0:0.2 and 0.1) coordinate (right_top);
        \draw[  red] (right_top) -- ($(n1.north)+(0.2,0)$);
    \end{tikzpicture}
    }
    \qquad
    \adjustbox{valign=c}{
    \begin{tikzpicture}[yscale=0.7,xscale=0.7]
        \node[block] (n1) at (0,0) {};
        \node[block] (n2) at (1.2,0) {};
        \draw  ($(n1.north)+(-0.2,0)$) -- ++(0,0.6) arc (180:0:0.8 and 0.4) coordinate (out_end);
        \draw[  red] (out_end) -- ($(n2.north)+(0.2,0)$);
        \draw  ($(n1.north)+(0.2,0)$) -- ++(0,0.6) arc (180:0:0.4 and 0.2) coordinate (in_end);
        \draw[  red] (in_end) -- ($(n2.north)+(-0.2,0)$);
    \end{tikzpicture}
    }
    \qquad
    \adjustbox{valign=c}{
    \begin{tikzpicture}[yscale=0.7,xscale=0.7]
        \node[block] (n1) at (0,0) {};
        \node[block] (n2) at (1.2,0) {};
        \node[block] (n3) at (2.4,0) {};
        \draw  ($(n1.north)+(-0.2,0)$) -- ++(0,0.8) arc (180:0:1.4 and 0.7) coordinate (big_end);
        \draw[  red] (big_end) -- ($(n3.north)+(0.2,0)$);
        \draw  ($(n1.north)+(0.2,0)$) -- ++(0,0.6) arc (180:0:0.4 and 0.2) coordinate (mid1);
        \draw  (mid1) -- ($(n2.north)+(-0.2,0)$);       
        \draw  ($(n2.north)+(0.2,0)$) -- ++(0,0.6) arc (180:0:0.4 and 0.2) coordinate (mid2);
        \draw[  red] (mid2) -- ($(n3.north)+(-0.2,0)$);
    \end{tikzpicture}
    }
\end{equation}
which correspond to $\text{Tr}(\mathbf{C})$, \{$\text{Tr}(\mathbf{A}\mathbf{C})$, $\text{Tr}(\mathbf{B}\mathbf{C})\}, \{\text{Tr}(\mathbf{A}\mathbf{B}\mathbf{C})$, $\text{Tr}(\mathbf{B}\mathbf{A}\mathbf{C})$\}.\footnote{In each trace expression, any occurrence of $\mathbf{A}$, $\mathbf{B}$, or $\mathbf{C}$ may be replaced by its transpose, yielding additional valid generators, e.g. $\mathrm{tr}(\mathbf{A}^\top \mathbf{C})$, $\mathrm{tr}(\mathbf{A}\mathbf{C}^\top)$, $\mathrm{tr}(\mathbf{A}^\top \mathbf{B} \mathbf{C}^\top)$, etc.}

We can get equivariances by removing the output,
\begin{equation}
    \tikzset{
        block/.style={draw,   rounded corners=2pt, minimum width=0.6cm, minimum height=0.2cm}
    }
    \adjustbox{valign=c}{
    \begin{tikzpicture}[yscale=0.7,xscale=0.7]
        \draw  ($(n1.north)+(-0.3,0)$) arc (180:0:0.3);       
    \end{tikzpicture}
    }
    \qquad
    \adjustbox{valign=c}{
    \begin{tikzpicture}[yscale=0.7,xscale=0.7]
        \node[block] (n1) at (0,0) {};
        \draw  ($(n1.north)+(-0.2,0)$) -- ++(0,0.6) arc (180:0:0.8 and 0.4) coordinate (out_end);
        \draw  ($(n1.north)+(0.2,0)$) -- ++(0,0.6) arc (180:0:0.4 and 0.2) coordinate (in_end);
    \end{tikzpicture}
    }
    \qquad
    \adjustbox{valign=c}{
    \begin{tikzpicture}[yscale=0.7,xscale=0.7]
        \node[block] (n1) at (0,0) {};
        \node[block] (n2) at (1.2,0) {};
        \draw  ($(n1.north)+(-0.2,0)$) -- ++(0,0.8) arc (180:0:1.4 and 0.7) coordinate (big_end);
        \draw  ($(n1.north)+(0.2,0)$) -- ++(0,0.6) arc (180:0:0.4 and 0.2) coordinate (mid1);
        \draw  (mid1) -- ($(n2.north)+(-0.2,0)$);      
        \draw  ($(n2.north)+(0.2,0)$) -- ++(0,0.6) arc (180:0:0.4 and 0.2) coordinate (mid2);
    \end{tikzpicture}
    }
\end{equation}
which is $\mathbf{I}$, $\{ \mathbf{A},\mathbf{B} \}$, $ \{\mathbf{A}\mathbf{B}, \mathbf{B}\mathbf{A} \}$. These equivariant operations correspond to tensor contraction and summation in the context of Cartesian tensor features \citep{Wang2024}. 

\textbf{Matrix input and vector output:}
If we set the matrix features $\mathbf{h}_i = \mathbf{A}$, $\mathbf{h}_j = \mathbf{B}$ and $\mathbf{A},  \mathbf{B} \in \mathbb{R} ^{3 \times 3}$ and the interaction message $\mathbf{m}_{ij} = \mathbf{v}$ $\in \mathbb{R} ^{3}$. The invariances we can construct are as follows,
\begin{equation}
\tikzset{
    block/.style={draw,   rounded corners=2pt, minimum width=0.6cm, minimum height=0.2cm},
    ball/.style={draw,   circle, minimum size=0.4cm},
    dot/.style={fill, circle, minimum size=3pt, inner sep=0pt},
    wire/.style={black},
    redwire/.style={red}
}
    \adjustbox{valign=c}{
    \begin{tikzpicture}[scale=0.8]
        \node[block] (n1) at (0,0) {};
        \node[ball]  (c1) at (1.3,0) {}; 
        \coordinate (Apex) at ($(n1.north)+(0.25, 0.8)$);
        \draw[wire] ($(n1.north)+(0.25,0)$) -- (Apex) node[dot] {};
        \draw[wire] (Apex) arc (90:180:0.5) -- ($(n1.north)+(-0.25,0)$);
        \draw[wire] (Apex) arc (90:0:1.05 and 0.5) coordinate (end_arc);
        \draw[redwire] (end_arc) -- (c1.north);
    \end{tikzpicture}
    }
    \adjustbox{valign=c}{
    \begin{tikzpicture}[scale=0.8]
        \node[block] (n1) at (0,0) {};
        \node[block] (n2) at (1.3,0) {};
        \node[ball]  (c1) at (2.6,0) {};
        \coordinate (Apex) at ($(n1.north)+(0.25, 0.8)$);
        \draw[wire] ($(n1.north)+(0.25,0)$) -- (Apex) node[dot] {};
        \draw[wire] (Apex) arc (90:180:0.5) -- ($(n1.north)+(-0.25,0)$);
        \draw[wire] (Apex) arc (90:0:0.8 and 0.5) -- ($(n2.north)+(-0.25,0)$);
        \draw[wire] ($(n2.north)+(0.25,0)$) -- ++(0,0.5) arc (180:0:0.525) coordinate (end2_temp);
        \draw[redwire] (end2_temp) -- (c1.north);
    \end{tikzpicture}
    }
    \adjustbox{valign=c}{
    \begin{tikzpicture}[scale=0.8]
        \node[block] (n1) at (0,0) {};
        \node[block] (n2) at (1.3,0) {};
        \node[ball]  (c1) at (2.6,0) {};
        \coordinate (BigApex) at ($(n2.north)+(0.25, 1.0)$);
        \draw[wire] ($(n2.north)+(0.25,0)$) -- (BigApex) node[dot] {};
        \draw[wire] (BigApex) arc (90:180:1.8 and 0.6) -- ($(n1.north)+(-0.25,0)$);
        \draw[wire] (BigApex) arc (90:0:1.05 and 0.6) coordinate (end3);
        \draw[redwire] (end3) -- (c1.north);
        \draw[wire] ($(n1.north)+(0.25,0)$) -- ++(0,0.4) arc (180:0:0.4) -- ($(n2.north)+(-0.25,0)$);

    \end{tikzpicture}
    }
\end{equation}
We can get the corresponding equivariances by derivation, 
\begin{equation}
\tikzset{
    block/.style={draw,   rounded corners=2pt, minimum width=0.6cm, minimum height=0.2cm},
    ball/.style={draw,   circle, minimum size=0.4cm},
    dot/.style={fill, circle, minimum size=3pt, inner sep=0pt},
    wire/.style={  black},
    redwire/.style={  red}
}
    \adjustbox{valign=c}{
    \begin{tikzpicture}[scale=0.8]
        \node[block] (n1) at (0,0) {};
        \coordinate (Apex) at ($(n1.north)+(0.25, 0.8)$);
        \draw[wire] ($(n1.north)+(0.25,0)$) -- (Apex) node[dot] {};
        \draw[wire] (Apex) arc (90:180:0.5) -- ($(n1.north)+(-0.25,0)$);
        \draw[wire] (Apex) arc (90:0:1.05 and 0.5) coordinate (end_arc);
    \end{tikzpicture}
    }
    \adjustbox{valign=c}{
    \begin{tikzpicture}[scale=0.8]
        \node[block] (n1) at (0,0) {};
        \node[block] (n2) at (1.3,0) {};
        \coordinate (Apex) at ($(n1.north)+(0.25, 0.8)$);
        \draw[wire] ($(n1.north)+(0.25,0)$) -- (Apex) node[dot] {};
        \draw[wire] (Apex) arc (90:180:0.5) -- ($(n1.north)+(-0.25,0)$);
        \draw[wire] (Apex) arc (90:0:0.8 and 0.5) -- ($(n2.north)+(-0.25,0)$);
        \draw[wire] ($(n2.north)+(0.25,0)$) -- ++(0,0.5) arc (180:0:0.525) coordinate (end2_temp);
    \end{tikzpicture}
    }
    \adjustbox{valign=c}{
    \begin{tikzpicture}[scale=0.8]
        \node[block] (n1) at (0,0) {};
        \node[block] (n2) at (1.3,0) {};
        \coordinate (BigApex) at ($(n2.north)+(0.25, 1.0)$);
        \draw[wire] ($(n2.north)+(0.25,0)$) -- (BigApex) node[dot] {};
        \draw[wire] (BigApex) arc (90:180:1.8 and 0.6) -- ($(n1.north)+(-0.25,0)$);
        \draw[wire] (BigApex) arc (90:0:1.05 and 0.6) coordinate (end3);
        \draw[wire] ($(n1.north)+(0.25,0)$) -- ++(0,0.4) arc (180:0:0.4) -- ($(n2.north)+(-0.25,0)$);

    \end{tikzpicture}
    }
\end{equation}
where the first diagram $A_{ij} \epsilon_{ijk}$ is the axial vector, which consists of the elements of the antisymmetric part of $A_{ij}$.

\subsection{Constructing equivariant operations for spherical tensor feature}
We can also use this framework to  express the equivariant function for spherical tensor inputs and outputs. We set the equivariant function with inputs feature $\mathbf{h}_i = \bm a$ of representation $l_a$, $\mathbf{h}_j = \bm b$ of representation $l_b$ and output message $\mathbf{m}_{ij} = \bm c$ of representation $l_c$. According to Proposition \ref{Spher_gene} we can construct the tensor network generator
\begin{equation}
    \adjustbox{valign = c, scale=0.8}{\begin{tikzpicture}
        \draw (90:1) node[draw,circle] (pc) {$P_{l_c}$};
        \draw (-30:1) node[draw,circle] (pb) {$P_{l_b}$};
        \draw (210:1) node[draw,circle] (pa) {$P_{l_a}$};
        \draw (90:2.3) node[draw,circle] (c) {$\bm c$};
        \draw (-30:2.3) node[draw,circle] (b) {$\bm b$};
        \draw (210:2.3) node[draw,circle] (a) {$\bm a$};
        \draw (pa) edge["\footnotesize$l_a$"'] (a);
        \draw (pb) edge["\footnotesize$l_b$"] (b);
        \draw (pc) edge[color ={rgb, 255:red, 208; green, 2; blue, 27 },"\footnotesize\textcolor{black}{$l_c$}"] (c);
        \draw (pa) edge[ultra thick] (pb);
        \draw (pb) edge[ultra thick] (pc);
        \draw (pc) edge[ultra thick] (pa);
        \
    \end{tikzpicture}}
    \qquad
    \adjustbox{valign = c, scale=0.8}{\begin{tikzpicture}
        \node[circle, fill, inner sep=1.5pt] (fill) at (0,0) {}; 
        \draw (90:1) node[draw,circle] (pc) {$P_{l_c}$};
        \draw (-30:1) node[draw,circle] (pb) {$P_{l_b}$};
        \draw (210:1) node[draw,circle] (pa) {$P_{l_a}$};
        \draw (90:2.3) node[draw,circle] (c) {$\bm c$};
        \draw (-30:2.3) node[draw,circle] (b) {$\bm b$};
        \draw (210:2.3) node[draw,circle] (a) {$\bm a$};        
        \draw (pa) edge["\footnotesize$l_a$"'] (a);
        \draw (pb) edge["\footnotesize$l_b$"] (b);
        \draw (pc) edge[color ={rgb, 255:red, 208; green, 2; blue, 27 },"\footnotesize\textcolor{black}{$l_c$}"] (c);        
        \draw (pa) edge[ultra thick] (pb);
        \draw (pb) edge[ultra thick] (pc);
        \draw (pc) edge[ultra thick] (pa);
        \draw (fill) edge (pa);
        \draw (fill) edge (pb);
        \draw (fill) edge (pc);
    \end{tikzpicture}}
\end{equation}

The thick line in the above diagrams means a group legs with type-1. See details in Appendix \ref{sec:sph_equiv_example}.

In our framework, we can obtain the equivariant function by removing the output tensor.
\begin{equation}
    \adjustbox{valign = c, scale=0.8}{\begin{tikzpicture}
        \draw (90:1) node[draw,circle] (pc) {$P_{l_c}$};
        \draw (-30:1) node[draw,circle] (pb) {$P_{l_b}$};
        \draw (210:1) node[draw,circle] (pa) {$P_{l_a}$};
        \coordinate (c) at (90:2.3) ;
        \draw (-30:2.3) node[draw,circle] (b) {$\bm b$};
        \draw (210:2.3) node[draw,circle] (a) {$\bm a$};
        \draw (pa) edge["\footnotesize$l_a$"'] (a);
        \draw (pb) edge["\footnotesize$l_b$"] (b);
        \draw (pc) edge["\footnotesize$l_c$"] (c);
        \draw (pa) edge[ultra thick] (pb);
        \draw (pb) edge[ultra thick] (pc);
        \draw (pc) edge[ultra thick] (pa);
        \
    \end{tikzpicture}}
    \qquad
    \adjustbox{valign = c, scale=0.8}{\begin{tikzpicture}
        \node[circle, fill, inner sep=1.5pt] (fill) at (0,0) {}; 
        \draw (90:1) node[draw,circle] (pc) {$P_{l_c}$};
        \draw (-30:1) node[draw,circle] (pb) {$P_{l_b}$};
        \draw (210:1) node[draw,circle] (pa) {$P_{l_a}$};
        \coordinate (c) at (90:2.3) ;
        \draw (-30:2.3) node[draw,circle] (b) {$\bm b$};
        \draw (210:2.3) node[draw,circle] (a) {$\bm a$};        
        \draw (pa) edge["\footnotesize$l_a$"'] (a);
        \draw (pb) edge["\footnotesize$l_b$"] (b);
        \draw (pc) edge["\footnotesize$l_c$"] (c);     
        \draw (pa) edge[ultra thick] (pb);
        \draw (pb) edge[ultra thick] (pc);
        \draw (pc) edge[ultra thick] (pa);
        \draw (fill) edge (pa);
        \draw (fill) edge (pb);
        \draw (fill) edge (pc);
    \end{tikzpicture}}
\end{equation}
These are precisely the TP operations of the TFN-style proposed in works \citep{thomas2018tensorfieldnetworksrotation,NEURIPS2018_488e4104} (we give more detail in Appendix \ref{sec:sph_equiv_example}).

\section{Related work}\label{re_wor}
\textbf{Theory of invariant and equivariant functions}: For Cartesian tensors: work \cite{NEURIPS2021_f1b07759} focus on constructing the $O(n)$ and $SO(n)$ invariant and equivariant functions with vector inputs and outputs; works \cite{gregory2024learningequivarianttensorfunctions} and \cite{10756158}  show how to construct the $O(n)$ invariant and equivariant polynomials of the Cartesian tensors in theory;  work
\cite{pmlr-v202-pearce-crump23a} constructed $O(n)$ and $SO(n)$ equivariant linear maps between Cartesian tensors based on Brauer algebra \cite{brauer1937algebras}. For spherical tensor: 3D Steerable CNNs \citep{NEURIPS2018_488e4104} characterize the equivariant linear maps between the spherical tensor feature. Work \cite{Yarotsky2022} proposes invariant/equivariant maps based on shallow neural networks and the given generating set of invariant and equivariant polynomials, and give the concrete construct for $SE(2)$ symmetric functions. Furthermore, work \cite{blum2023machine} show how to get equivariant polynomials from the derivative of invariant polynomials based on the method of B. Malgrange. 

\textbf{Geometric graph neural networks.} 
Existing works for geometric GNNs can be categorized by their strategy for constructing equivariant operations.
Scalar-based approaches generate features primarily through invariant operations by inner products and the equivariant operations typically by vector summations and product  \citep{pmlr-v139-satorras21a,pmlr-v139-schutt21a,9711441}. Tensor Product-based approaches, pioneered by TFN \citep{thomas2018tensorfieldnetworksrotation} and 3D Steerable CNNs \citep{NEURIPS2018_488e4104}, use the higher-type spherical tensors as feature and construct equivariant operations using Clebsch-Gordan tensor products. This line of work also includes \cite{le2022equivariantgraphattentionnetworks,brandstetter2022geometric,liao2023equiformer}.
Recently, higher-rank Cartesian tensors are also used as the equivariant feature in the message passing \citep{Wang2024}. 

\textbf{Tensor network:} The structured decompositions of high-dimensional tensors into networks of lower-dimensional tensors were originally developed in the context of quantum many-body physics \citep{PhysRevLett.91.147902,PhysRevB.73.094423,SCHOLLWOCK201196}, but have since found widespread applications in machine learning \citep{levine2018deep,NEURIPS2019_2bd2e337,NEURIPS2022_fe91414c,wang2025tensornetworksmeetneural}, quantum computing \citep{PhysRevLett.129.090502,PhysRevLett.128.030501}, applied mathematics \citep{doi:10.1137/090752286}, and beyond. For symmetric tensor networks, early contributions  \citep{PhysRevA.82.050301,PhysRevB.83.115125,PhysRevB.86.195114} described how to incorporate the $SU(2)$ symmetry for tensor networks states for quantum many-body systems. More recently, work \cite{Li_2024} use the fusion diagram (a graphical representation of the successive Clebsch–Gordan products) to construct the $SO(3)$ equivariant blocks. Work \cite{Hodapp_2024} used the symmetric tensor network for constructing the machine-learning interatomic potentials. In addition, work \cite{10756158} shows how to construct the $O(n)$ invariant and equivariant polynomials of Cartesian tensors by tensor network. 

In this work, we establish a general framework for constructing concrete tensor network generators and characterize continuous $SO(3)$ invariant and equivariant functions, capable of handling inputs and outputs composed of a tuple of Cartesian and spherical tensors.

\section{Conclusion and Discussion}
This work presents a unified and constructive framework for characterizing general continuous $SO(3)$ symmetric functions. It covers both linear and nonlinear maps and supports Cartesian and spherical tensor inputs and outputs. We introduce the tensor network generators: invariants that consist of connected tensor network templates built from the inputs together with some specific symmetric tensors i.e. $\delta,\epsilon,P_l$, while equivariants follow from the similar templates with open legs. Then we construct the general continuous invariant and equivariant functions built on these tensor networks. Moreover, for the most prevalent setting in scientific modeling—vector inputs and Cartesian or spherical tensor outputs—we derive a concrete and concise form for general continuous $SO(3)$ equivariant maps, and extend it to $O(3)$ by explicitly accounting for parity.

From a practical perspective, the framework can be plugged into machine learning tasks that require rotational symmetry: one first constructs a bank of invariant/equivariant quantities by tensor network templates, and then applies conventional neural networks to learn the remaining parameters. It is also useful for deriving symmetry-preserving operations inside geometric graph neural networks. An important direction for future work is to select task-specific subset of these generator sets to balance expressivity and efficiency. Another approach is to leverage low-rank or otherwise structured tensor network \cite{PhysRevLett.91.147902,verstraete2004renormalizationalgorithmsquantummanybody,PhysRevLett.101.110501} parameterizations to implement complex equivariant operations more efficiently.


\newpage
\appendix

\section{Proof of Lemma 3.2}\label{sec:proof_3.2}
\begin{proof}[Proof of Lemma 3.2]
    Let $T$ be an $SO(3)$-symmetric tensor whose indices take the 3D representation of $SO(3)$, and $\bm x_i$ be variables each in $\mathbb R^3$. We can define a $SO(3)$-symmetric polynomial
    \begin{equation}
        f(\bm x)=\sum_{i_1,\dots,i_n}T_{i_1,\dots,i_n}(\bm x_1)_{i_1}\cdots(\bm x_n)_{i_n}
    \end{equation}
By Lemma 3.1, we have $f(\bm x)=\sum_i c_i p_i$
where each $p_i$ is product of elements in $\{\bm x_i\cdot \bm x_j,(\bm x_i\times \bm x_j)\cdot \bm x_k\}$ and $c_i$ is the coefficients. Taking derivative of $\bm x_1,\dots,\bm x_n$ on both sides, we can see that $T$ is of the form of finite sum $T=\sum_ic_iT_i$, where $c_i\in \mathbb{R}$ and each $T_i$ is the tensor product of $\delta_{ij}$ and $\epsilon_{ijk}$. 

Considering the parity of the rank, there is odd (even) $\epsilon_{ijk}$ in each $T_i$ if the rank of $T$ is odd (even). Notice that
    \begin{equation}
        \epsilon_{ijk}\epsilon_{lmn}=\delta_{il}(\delta_{jm}\delta_{kn}-\delta_{jn}\delta_{km})-\delta_{im}(\delta_{jl}\delta_{kn}-\delta_{jn}\delta_{kl})+\delta_{in}(\delta_{jl}\delta_{km}-\delta_{jm}\delta_{kl})
    \end{equation}
Then tensor product of odd number of $\epsilon_{ijk}$ reduces to one $\epsilon_{ijk}$. The tensor product of even number of $\epsilon_{ijk}$ reduces to product and sum of the tensor $\delta_{ij}$.
\end{proof}

\section{Proof of Proposition \ref{thm:tensor}}\label{sec:proof_3.3}
\begin{proof}[Proof of Proposition \ref{thm:tensor}]
    It is straightforward to see that each invariant polynomial is a finite sum of homogeneous invariant polynomials. Therefore, we only need to study homogeneous invariant polynomials. Let $p$ be a homogeneous invariant polynomial. Then we can write $p(\bm x_1,\dots, \bm x_n)$ as a tensor network contraction.
    \begin{equation}
        p(\bm x_1,\dots,\bm x_n)=\adjustbox{valign=c}{
        \begin{tikzpicture} 
            \draw (-2.8,0) -- (-2.8,-1.5); 
            \draw (-2,0) -- (-2,-1.5); 
            \draw (-1.2,0) -- (-1.2,-1.5);
            \draw (-0.4,0) -- (-0.4,-1.5);
            \draw (0.4,0) -- (0.4,-1.5);  
            \draw (1.2,0) -- (1.2,-1.5); 
            \draw (2,0) -- (2,-1.5); 
            \draw (2.8,0) -- (2.8,-1.5); 
            \draw (0,0) node[draw,fill=white,rectangle,rounded corners=8pt, minimum width = 165pt] (p) {$T_p$};
            \draw (-2.4,-1.5) node[draw,fill=white,rectangle,rounded corners=4pt, minimum width = 35pt] (x1) {$\bm x_1$};
            \draw (-0.8,-1.5) node[draw,fill=white,rectangle,rounded corners=4pt, minimum width = 35pt] (x2) {$\bm x_1$};
            \draw (0.8,-1.5) node[draw,fill=white,rectangle,rounded corners=4pt, minimum width = 35pt] (x3) {$\bm x_n$};
            \draw (2.4,-1.5) node[draw,fill=white,rectangle,rounded corners=4pt, minimum width = 35pt] (x4) {$\bm x_n$};
            \draw (-2.4,-0.8) node {$\cdots$};
            \draw (-1.6,-0.8) node {$\cdots$};
            \draw (-0.8,-0.8) node {$\cdots$};
            \draw (0,-0.8) node {$\cdots$};
            \draw (0.8,-0.8) node {$\cdots$};
            \draw (1.6,-0.8) node {$\cdots$};
            \draw (2.4,-0.8) node {$\cdots$};
        \end{tikzpicture}
        }
    \end{equation}
    where multiplicity of $\bm x_1,\dots,\bm x_n$  is allowed. Since $p$ is invariant, for $g \in SO(3)$ we have
    \begin{equation}
        p(\bm x_1,\dots,\bm x_n)=p(U(g)^{\otimes r_1} \bm x_1,\dots,U(g)^{\otimes r_n} \bm x_n)
    \end{equation}
    $U$ is the 3D representations of $SO(3)$ and $r_i$ is the rank of $\bm x_i$. That is,
    \begin{equation}
        \adjustbox{valign=c}{
        \begin{tikzpicture}
            \draw (-2.8,0) -- (-2.8,-1.5); 
            \draw (-2,0) -- (-2,-1.5); 
            \draw (-1.2,0) -- (-1.2,-1.5);
            \draw (-0.4,0) -- (-0.4,-1.5);
            \draw (0.4,0) -- (0.4,-1.5);  
            \draw (1.2,0) -- (1.2,-1.5); 
            \draw (2,0) -- (2,-1.5); 
            \draw (2.8,0) -- (2.8,-1.5); 
            \draw (0,0) node[draw,fill=white,rectangle,rounded corners=8pt, minimum width = 165pt] (p) {$T_p$};
            \draw (-2.4,-1.5) node[draw,fill=white,rectangle,rounded corners=4pt, minimum width = 35pt] (x1) {$\bm x_1$};
            \draw (-0.8,-1.5) node[draw,fill=white,rectangle,rounded corners=4pt, minimum width = 35pt] (x2) {$\bm x_1$};
            \draw (0.8,-1.5) node[draw,fill=white,rectangle,rounded corners=4pt, minimum width = 35pt] (x3) {$\bm x_n$};
            \draw (2.4,-1.5) node[draw,fill=white,rectangle,rounded corners=4pt, minimum width = 35pt] (x4) {$\bm x_n$};
            \draw (-2.4,-0.8) node {$\cdots$};
            \draw (-1.6,-0.8) node {$\cdots$};
            \draw (-0.8,-0.8) node {$\cdots$};
            \draw (0,-0.8) node {$\cdots$};
            \draw (0.8,-0.8) node {$\cdots$};
            \draw (1.6,-0.8) node {$\cdots$};
            \draw (2.4,-0.8) node {$\cdots$};
        \end{tikzpicture}
        }=\adjustbox{valign=c}{
        \begin{tikzpicture}
            \draw (-2.8,0) -- (-2.8,-1.5); 
            \draw (-2,0) -- (-2,-1.5); 
            \draw (-1.2,0) -- (-1.2,-1.5);
            \draw (-0.4,0) -- (-0.4,-1.5);
            \draw (0.4,0) -- (0.4,-1.5);  
            \draw (1.2,0) -- (1.2,-1.5); 
            \draw (2,0) -- (2,-1.5); 
            \draw (2.8,0) -- (2.8,-1.5); 
            \draw (0,0) node[draw,fill=white,rectangle,rounded corners=8pt, minimum width = 165pt] (p) {$T_p$};
            \draw (-2.4,-1.5) node[draw,fill=white,rectangle,rounded corners=4pt, minimum width = 35pt] (x1) {$\bm x_1$};
            \draw (-0.8,-1.5) node[draw,fill=white,rectangle,rounded corners=4pt, minimum width = 35pt] (x2) {$\bm x_1$};
            \draw (0.8,-1.5) node[draw,fill=white,rectangle,rounded corners=4pt, minimum width = 35pt] (x3) {$\bm x_n$};
            \draw (2.4,-1.5) node[draw,fill=white,rectangle,rounded corners=4pt, minimum width = 35pt] (x4) {$\bm x_n$};
            \draw (-2.4,-0.8) node {\tiny$\cdots$};
            \draw (-1.6,-0.8) node {\tiny$\cdots$};
            \draw (-0.8,-0.8) node {\tiny$\cdots$};
            \draw (0,-0.8) node {\tiny$\cdots$};
            \draw (0.8,-0.8) node {\tiny$\cdots$};
            \draw (1.6,-0.8) node {\tiny$\cdots$};
            \draw (2.4,-0.8) node {\tiny$\cdots$};
            \draw (-2.8,-0.8) node[draw,circle,fill=white,inner sep = 1pt] {\tiny$U$};
            \draw (-2,-0.8) node[draw,circle,fill=white,inner sep = 1pt] {\tiny$U$};
            \draw (-1.2,-0.8) node[draw,circle,fill=white,inner sep = 1pt] {\tiny$U$};
            \draw (-0.4,-0.8) node[draw,circle,fill=white,inner sep = 1pt] {\tiny$U$};
            \draw (2.8,-0.8) node[draw,circle,fill=white,inner sep = 1pt] {\tiny$U$};
            \draw (2,-0.8) node[draw,circle,fill=white,inner sep = 1pt] {\tiny$U$};
            \draw (1.2,-0.8) node[draw,circle,fill=white,inner sep = 1pt] {\tiny$U$};
            \draw (0.4,-0.8) node[draw,circle,fill=white,inner sep = 1pt] {\tiny$U$};
        \end{tikzpicture}
        }
    \end{equation}
    Taking derivative of $ \bm x_1 \cdots  \bm x_1 \cdots \bm x_n \cdots  \bm x_n$ on both sides, we have
    \begin{equation}\label{eq:derivation}
        \adjustbox{valign=c}{
        \begin{tikzpicture}
            \draw (-2.8,0) -- (-2.8,-2.5); 
            \draw (-2,0) -- (-2,-2.5); 
            \draw (-1.2,0) -- (-1.2,-2.5);
            \draw (-0.4,0) -- (-0.4,-2.5);
            \draw (0.4,0) -- (0.4,-2.5);  
            \draw (1.2,0) -- (1.2,-2.5); 
            \draw (2,0) -- (2,-2.5); 
            \draw (2.8,0) -- (2.8,-2.5); 
            \draw (0,0) node[draw,fill=white,rectangle,rounded corners=8pt, minimum width = 165pt] (p) {$T_p$};
            \draw (-2.4,-0.8) node {$\cdots$};
            \draw (-1.6,-0.8) node {$\cdots$};
            \draw (-0.8,-0.8) node {$\cdots$};
            \draw (0,-0.8) node {$\cdots$};
            \draw (0.8,-0.8) node {$\cdots$};
            \draw (1.6,-0.8) node {$\cdots$};
            \draw (2.4,-0.8) node {$\cdots$};
            \draw (-1.6,-1.5) node[draw,fill=white,rectangle,rounded corners=8pt, minimum width = 80pt,minimum height = 20pt] (P1) {$P_{\rm sym}$};
            \draw (1.6,-1.5) node[draw,fill=white,rectangle,rounded corners=8pt, minimum width = 80pt,minimum height = 20pt] (P2) {$P_{\rm sym}$};
        \end{tikzpicture}
        }=\adjustbox{valign=c}{
        \begin{tikzpicture}
            \draw (-2.8,0) -- (-2.8,-2.5); 
            \draw (-2,0) -- (-2,-2.5); 
            \draw (-1.2,0) -- (-1.2,-2.5);
            \draw (-0.4,0) -- (-0.4,-2.5);
            \draw (0.4,0) -- (0.4,-2.5);  
            \draw (1.2,0) -- (1.2,-2.5); 
            \draw (2,0) -- (2,-2.5); 
            \draw (2.8,0) -- (2.8,-2.5); 
            \draw (0,0) node[draw,fill=white,rectangle,rounded corners=8pt, minimum width = 165pt] (p) {$T_p$};
            \draw (-2.4,-0.8) node {\tiny$\cdots$};
            \draw (-1.6,-0.8) node {\tiny$\cdots$};
            \draw (-0.8,-0.8) node {\tiny$\cdots$};
            \draw (0,-0.8) node {\tiny$\cdots$};
            \draw (0.8,-0.8) node {\tiny$\cdots$};
            \draw (1.6,-0.8) node {\tiny$\cdots$};
            \draw (2.4,-0.8) node {\tiny$\cdots$};
            \draw (-2.8,-0.8) node[draw,circle,fill=white,inner sep = 1pt] {\tiny$U$};
            \draw (-2,-0.8) node[draw,circle,fill=white,inner sep = 1pt] {\tiny$U$};
            \draw (-1.2,-0.8) node[draw,circle,fill=white,inner sep = 1pt] {\tiny$U$};
            \draw (-0.4,-0.8) node[draw,circle,fill=white,inner sep = 1pt] {\tiny$U$};
            \draw (2.8,-0.8) node[draw,circle,fill=white,inner sep = 1pt] {\tiny$U$};
            \draw (2,-0.8) node[draw,circle,fill=white,inner sep = 1pt] {\tiny$U$};
            \draw (1.2,-0.8) node[draw,circle,fill=white,inner sep = 1pt] {\tiny$U$};
            \draw (0.4,-0.8) node[draw,circle,fill=white,inner sep = 1pt] {\tiny$U$};
            \draw (-1.6,-1.7) node[draw,fill=white,rectangle,rounded corners=8pt, minimum width = 80pt,minimum height = 20pt] (P1) {$P_{\rm sym}$};
            \draw (1.6,-1.7) node[draw,fill=white,rectangle,rounded corners=8pt, minimum width = 80pt,minimum height = 20pt] (P2) {$P_{\rm sym}$};
        \end{tikzpicture}
        }
    \end{equation}
    where $P_{\rm sym}$ is the projection to symmetric subspace under permutation within identical $\bm x_i$s.
    \begin{equation}
        {P_{\rm sym}}^{p_{1,1}\dots p_{t_i,r_i}}_{q_{1,1}\dots q_{t_i,r_i}}=\frac 1{t_i!}\sum_{\sigma\in S_{t_i}}\prod_j\prod_k\delta_{q_{\sigma(j),k}}^{p_{j,k}}
    \end{equation}
    where $t_i$ is the multiplicity of $x_i$, and $\sigma$ take value in all permutation of $t_i$ elements.

    It's easy to see that $P_{\rm sym}$ commutes with $U^{\otimes r_i t_i}$
    \begin{equation}\label{eq:commu_P_U}
        \adjustbox{valign=c}{
        \begin{tikzpicture}
            \draw (-2.8,0) -- (-2.8,-2.5); 
            \draw (-2,0) -- (-2,-2.5); 
            \draw (-1.2,0) -- (-1.2,-2.5);
            \draw (-0.4,0) -- (-0.4,-2.5);
            \draw (0.4,0) -- (0.4,-2.5);  
            \draw (1.2,0) -- (1.2,-2.5); 
            \draw (2,0) -- (2,-2.5); 
            \draw (2.8,0) -- (2.8,-2.5); 
            \draw (0,0) node[draw,fill=white,rectangle,rounded corners=8pt, minimum width = 165pt] (p) {$T_p$};
            \draw (-2.4,-0.8) node {\tiny$\cdots$};
            \draw (-1.6,-0.8) node {\tiny$\cdots$};
            \draw (-0.8,-0.8) node {\tiny$\cdots$};
            \draw (0,-0.8) node {\tiny$\cdots$};
            \draw (0.8,-0.8) node {\tiny$\cdots$};
            \draw (1.6,-0.8) node {\tiny$\cdots$};
            \draw (2.4,-0.8) node {\tiny$\cdots$};
            \draw (-2.8,-0.8) node[draw,circle,fill=white,inner sep = 1pt] {\tiny$U$};
            \draw (-2,-0.8) node[draw,circle,fill=white,inner sep = 1pt] {\tiny$U$};
            \draw (-1.2,-0.8) node[draw,circle,fill=white,inner sep = 1pt] {\tiny$U$};
            \draw (-0.4,-0.8) node[draw,circle,fill=white,inner sep = 1pt] {\tiny$U$};
            \draw (2.8,-0.8) node[draw,circle,fill=white,inner sep = 1pt] {\tiny$U$};
            \draw (2,-0.8) node[draw,circle,fill=white,inner sep = 1pt] {\tiny$U$};
            \draw (1.2,-0.8) node[draw,circle,fill=white,inner sep = 1pt] {\tiny$U$};
            \draw (0.4,-0.8) node[draw,circle,fill=white,inner sep = 1pt] {\tiny$U$};
            \draw (-1.6,-1.7) node[draw,fill=white,rectangle,rounded corners=8pt, minimum width = 80pt,minimum height = 20pt] (P1) {$P_{\rm sym}$};
            \draw (1.6,-1.7) node[draw,fill=white,rectangle,rounded corners=8pt, minimum width = 80pt,minimum height = 20pt] (P2) {$P_{\rm sym}$};
        \end{tikzpicture}
        }=\adjustbox{valign=c}{
        \begin{tikzpicture}
            \draw (-2.8,0) -- (-2.8,-2.5); 
            \draw (-2,0) -- (-2,-2.5); 
            \draw (-1.2,0) -- (-1.2,-2.5);
            \draw (-0.4,0) -- (-0.4,-2.5);
            \draw (0.4,0) -- (0.4,-2.5);  
            \draw (1.2,0) -- (1.2,-2.5); 
            \draw (2,0) -- (2,-2.5); 
            \draw (2.8,0) -- (2.8,-2.5); 
            \draw (0,0) node[draw,fill=white,rectangle,rounded corners=8pt, minimum width = 165pt] (p) {$T_p$};
            \draw (-2.4,-2) node {\tiny$\cdots$};
            \draw (-1.6,-2) node {\tiny$\cdots$};
            \draw (-0.8,-2) node {\tiny$\cdots$};
            \draw (0,-2) node {\tiny$\cdots$};
            \draw (0.8,-2) node {\tiny$\cdots$};
            \draw (1.6,-2) node {\tiny$\cdots$};
            \draw (2.4,-2) node {\tiny$\cdots$};
            \draw (-2.8,-2) node[draw,circle,fill=white,inner sep = 1pt] {\tiny$U$};
            \draw (-2,-2) node[draw,circle,fill=white,inner sep = 1pt] {\tiny$U$};
            \draw (-1.2,-2) node[draw,circle,fill=white,inner sep = 1pt] {\tiny$U$};
            \draw (-0.4,-2) node[draw,circle,fill=white,inner sep = 1pt] {\tiny$U$};
            \draw (2.8,-2) node[draw,circle,fill=white,inner sep = 1pt] {\tiny$U$};
            \draw (2,-2) node[draw,circle,fill=white,inner sep = 1pt] {\tiny$U$};
            \draw (1.2,-2) node[draw,circle,fill=white,inner sep = 1pt] {\tiny$U$};
            \draw (0.4,-2) node[draw,circle,fill=white,inner sep = 1pt] {\tiny$U$};
            \draw (-1.6,-1.2) node[draw,fill=white,rectangle,rounded corners=8pt, minimum width = 80pt,minimum height = 20pt] (P1) {$P_{\rm sym}$};
            \draw (1.6,-1.1) node[draw,fill=white,rectangle,rounded corners=8pt, minimum width = 80pt,minimum height = 20pt] (P2) {$P_{\rm sym}$};
        \end{tikzpicture}
        }
    \end{equation}
    Therefore we may define
    \begin{equation}
        \adjustbox{valign=c}{
        \begin{tikzpicture}
            \draw (-2.8,0) -- (-2.8,-1.5); 
            \draw (-2,0) -- (-2,-1.5); 
            \draw (-1.2,0) -- (-1.2,-1.5);
            \draw (-0.4,0) -- (-0.4,-1.5);
            \draw (0.4,0) -- (0.4,-1.5);  
            \draw (1.2,0) -- (1.2,-1.5); 
            \draw (2,0) -- (2,-1.5); 
            \draw (2.8,0) -- (2.8,-1.5); 
            \draw (0,0) node[draw,fill=white,rectangle,rounded corners=8pt, minimum width = 165pt] (p) {$T'_p$};
            \draw (-2.4,-0.8) node {$\cdots$};
            \draw (-1.6,-0.8) node {$\cdots$};
            \draw (-0.8,-0.8) node {$\cdots$};
            \draw (0,-0.8) node {$\cdots$};
            \draw (0.8,-0.8) node {$\cdots$};
            \draw (1.6,-0.8) node {$\cdots$};
            \draw (2.4,-0.8) node {$\cdots$};
        \end{tikzpicture}
        }=
        \adjustbox{valign=c}{
        \begin{tikzpicture}
            \draw (-2.8,0) -- (-2.8,-2.5); 
            \draw (-2,0) -- (-2,-2.5); 
            \draw (-1.2,0) -- (-1.2,-2.5);
            \draw (-0.4,0) -- (-0.4,-2.5);
            \draw (0.4,0) -- (0.4,-2.5);  
            \draw (1.2,0) -- (1.2,-2.5); 
            \draw (2,0) -- (2,-2.5); 
            \draw (2.8,0) -- (2.8,-2.5); 
            \draw (0,0) node[draw,fill=white,rectangle,rounded corners=8pt, minimum width = 165pt] (p) {$T_p$};
            \draw (-2.4,-0.8) node {$\cdots$};
            \draw (-1.6,-0.8) node {$\cdots$};
            \draw (-0.8,-0.8) node {$\cdots$};
            \draw (0,-0.8) node {$\cdots$};
            \draw (0.8,-0.8) node {$\cdots$};
            \draw (1.6,-0.8) node {$\cdots$};
            \draw (2.4,-0.8) node {$\cdots$};
            \draw (-1.6,-1.5) node[draw,fill=white,rectangle,rounded corners=8pt, minimum width = 80pt,minimum height = 20pt] (P1) {$P_{\rm sym}$};
            \draw (1.6,-1.5) node[draw,fill=white,rectangle,rounded corners=8pt, minimum width = 80pt,minimum height = 20pt] (P2) {$P_{\rm sym}$};
        \end{tikzpicture}
        }
    \end{equation}
    Combining Eq.(\ref{eq:derivation}) and Eq.(\ref{eq:commu_P_U}), it is straightforward to see that $T'_p$ is an $SO(3)$-symmetric tensor. By Lemma 3.2, $T'_p$ is a linear combination of the tensor product of delta tensors and at most one Levi-Civita tensor. For polynomial $p(\bm x_1,\dots,\bm x_n)$, we can always use the partially permutation symmetrization of the tensor $T_p$ as its coefficients, that is,
    \begin{equation}
        p(\bm x_1,\dots,\bm x_n)=\adjustbox{valign=c}{
        \begin{tikzpicture}
            \draw (-2.8,0) -- (-2.8,-1.5); 
            \draw (-2,0) -- (-2,-1.5); 
            \draw (-1.2,0) -- (-1.2,-1.5);
            \draw (-0.4,0) -- (-0.4,-1.5);
            \draw (0.4,0) -- (0.4,-1.5);  
            \draw (1.2,0) -- (1.2,-1.5); 
            \draw (2,0) -- (2,-1.5); 
            \draw (2.8,0) -- (2.8,-1.5); 
            \draw (0,0) node[draw,fill=white,rectangle,rounded corners=8pt, minimum width = 165pt] (p) {$T'_p$};
            \draw (-2.4,-1.5) node[draw,fill=white,rectangle,rounded corners=4pt, minimum width = 35pt] (x1) {$\bm x_1$};
            \draw (-0.8,-1.5) node[draw,fill=white,rectangle,rounded corners=4pt, minimum width = 35pt] (x2) {$\bm x_1$};
            \draw (0.8,-1.5) node[draw,fill=white,rectangle,rounded corners=4pt, minimum width = 35pt] (x3) {$\bm x_n$};
            \draw (2.4,-1.5) node[draw,fill=white,rectangle,rounded corners=4pt, minimum width = 35pt] (x4) {$\bm x_n$};
            \draw (-2.4,-0.8) node {$\cdots$};
            \draw (-1.6,-0.8) node {$\cdots$};
            \draw (-0.8,-0.8) node {$\cdots$};
            \draw (0,-0.8) node {$\cdots$};
            \draw (0.8,-0.8) node {$\cdots$};
            \draw (1.6,-0.8) node {$\cdots$};
            \draw (2.4,-0.8) node {$\cdots$};
        \end{tikzpicture}
        }
    \end{equation}
    
     Therefore, $p(\bm x_1,\dots,\bm x_n)$ is a linear combination of the contraction  of  tensor networks formed by $\bm x_1,\dots,\bm x_n$(multiplicity is allowed) together with at most one Levi-Civita tensor $\epsilon_{ijk}$. Since contraction of disconnected tensor network is product of the contraction of each component, $\mathbb R[V]^{SO(3)}$ is generated by the contraction of connected tensor network formed by $\bm x_1,\dots,\bm x_n$ (multiplicity is allowed) together with at most one Levi-Civita tensor $\epsilon_{ijk}$.
\end{proof}

\section{Decomposition of tensor product representation $(1)^{\otimes l}$ }\label{sec:product_dec}
We have
\begin{equation}
    (1)^{\otimes l}=(l)^{d_{l,l}}\oplus(l-1)^{d_{l,l-1}}\oplus\cdots\oplus(j)^{d_{l,j}}\oplus\cdots\oplus(0)^{d_{l,0}}
\end{equation}
By counting the dimension of $L_z$ eigen-spaces \footnote{$L_x, L_y, L_z$ is the angular momentum operator of spin-1 particle in x, y, z direction, respectively\citep{zee2016group}.}, we have
\begin{equation}
    d_{l,j}=\sum_{i=j}^{\left\lfloor \frac {j+l}2\right\rfloor}\frac{l!}{i!(j+l-2i)!(i-j)!}-\sum_{i=j+1}^{\left\lfloor \frac {j+l+1}2\right\rfloor}\frac{l!}{i!(j+l+1-2i)!(i-j-1)!}.
\end{equation}
Particularly, $d_{l,l}=1$, $d_{l,l-1}=l-1$ and $d_{l,l-2}=\frac{l(l-1)}2$. When $l$ is small, the exact decomposition is listed below
\begin{align}
    (1)^{\otimes 2} &= (2)\oplus(1)\oplus(0)\\
    (1)^{\otimes 3}  &= (3)\oplus(2)^2\oplus(1)^3\oplus(0)\\
    (1)^{\otimes 4}  &= (4)\oplus(3)^3\oplus(2)^6\oplus(1)^6\oplus(0)^3\\
    (1)^{\otimes 5}  &= (5)\oplus (4)^4\oplus(3)^{10}\oplus(2)^{15}\oplus(1)^{15}\oplus(0)^6
\end{align}

\section{The exact form of tensor $P_l$}\label{sec:projector_form}

For $l$ representation, we have basis vector $|m\rangle$ ($-l\leq m \leq l$). The $l$ sub-representation in $(1)^{\otimes l}$ satisfies
\begin{equation}
    \sqrt{\frac{(2l)!(l-m)!}{(l+m)!}}|m\rangle= (L_-)^{l-m} |l\rangle =\left(\sum_i L_{-,i}\right)^{l-m}|1,\dots,1\rangle
\end{equation}
where $L_\pm = L_x\pm iL_y$ is the ladder operator.\footnote{$L_-|m\rangle = \sqrt{(l+m)(l-m+1)} |m-1\rangle$ and $L_-|-l\rangle=0$, where $|m\rangle$ is the standard basis of $l$ representation. Especially, for 1 representation, $L_-|1\rangle = \sqrt 2 |0\rangle$, $L_-|0\rangle = \sqrt 2 |-1\rangle$ and $L_-|-1\rangle = 0$ \citep{zee2016group}.}.

Then
\begin{align}
    |m\rangle&=\sqrt{\frac{(l+m)!}{(2l)!(l-m)!}} (\sqrt 2)^{l-m} \sum_{(s_i),\sum s_i = m} \frac{(l-m)!}{2^{d(s)}}|s_1,\dots,s_l\rangle \\
    &=\sqrt{\frac{(l-m)!(l+m)!2^{l-m}}{(2l)!}} \sum_{(s_i),\sum s_i = m} \frac1{2^{d(s)}}|s_1,\dots,s_l\rangle
\end{align}
where $s_i=0,\pm 1$ and $d(s)$ is the number of $-1$ in $(s)=(s_1,\dots,s_l)$. 

Therefore
\begin{equation}\label{eq:p_l}
    [P_l]_{(s)}^m = \frac1{2^{d(s)}}\sqrt{\frac{(l-m)!(l+m)!2^{l-m}}{(2l)!}} \delta_{\sum s_i,m}
\end{equation}

Equation \ref{eq:p_l} gives the components of $P_l$ in the spherical (spin-1) basis
$\{|+1\rangle,|0\rangle,|-1\rangle\}$.
To obtain the Cartesian components, we apply the basis change on each spin-1 leg.
Specifically, define $U\in\mathbb{C}^{3\times 3}$ by
\begin{equation}
   |s\rangle=\sum_{a\in\{x,y,z\}}U_{sa}\,|a\rangle,\qquad s\in\{+1,0,-1\}, 
\end{equation}
which is equivalent to
\begin{equation}
    |+1\rangle=-\frac{1}{\sqrt2}(|x\rangle+i|y\rangle),\qquad
|0\rangle=|z\rangle, \qquad
|-1\rangle=\frac{1}{\sqrt2}(|x\rangle-i|y\rangle),
\end{equation}
and hence
\begin{equation}
    U=\begin{pmatrix}
-\tfrac{1}{\sqrt2} & -\tfrac{i}{\sqrt2} & 0\\[3pt]
0 & 0 & 1\\[3pt]
\tfrac{1}{\sqrt2} & -\tfrac{i}{\sqrt2} & 0
\end{pmatrix},
\end{equation}
Then the Cartesian components of $P_l$ are given by
\begin{equation}
    [P_l]^m_{a_1\cdots a_l}
=[P_l]^m_{s_1\cdots s_l}\,
[U]_{s_1 a_1}\cdots [U]_{s_l a_l},
\qquad a_i\in\{x,y,z\}.
\end{equation}
One can also further choose a real basis for the type-$l$ output space by applying an
additional change-of-basis matrix $R^{(l)}$ on the output index, i.e., $[\widetilde P_l]^{\mu}_{a_1\cdots a_l} = [R^{(l)}]_{\mu m}\,[P_l]^m_{a_1\cdots a_l}$, where $\mu$ indexes the chosen real type-$l$ basis.

$P_l$ have some nice properties as show in lemma \ref{P_l_proper}. We give the proof as follows:
\begin{proof}[Proof of lemma \ref{P_l_proper} ]
\begin{enumerate}
     \item[1,2] By the definition of reduction of representation.
     \item[3] Contraction with $\delta_{ij}$ induces an  $SO(3)$ symmetric map $(l)\rightarrow (1)^{\otimes(l-2)}$ which is zero.
     \item[4] Contraction with $\epsilon_{ijk}$ induces an  $SO(3)$ symmetric map $(l)\rightarrow (1)^{\otimes(l-1)}$ which is zero.
     \item[5] From the equation $\epsilon_{ijk}\epsilon_{kpq}=\delta_{ip}\delta_{jq}-\delta_{iq}\delta_{jp}$, we have 
     \begin{align}
         [P_l]^{m}_{ \dots, a_j, \dots a_i,\dots} = [P_l]^{m}_{ \dots, a_i, \dots a_j,\dots} -\nonumber
          [P_l]^{m}_{ \dots, a'_i, \dots a'_j,\dots} \epsilon_{a'_ia'_jk}\epsilon_{ka_ia_j}
     \end{align}
     Then we may apply 4.
\end{enumerate}
\end{proof}

\section{Proof of Proposition \ref{Spher_gene}} \label{sec:proof_3.4}
\begin{proof}[Proof of Proposition \ref{Spher_gene}]
    Let $p$ be a homogeneous invariant polynomial. Similar to the proof of Proposition \ref{thm:tensor}, we have
    \begin{equation}
        p(\bm x_1,\dots,\bm x_n)=\adjustbox{valign=c}{
        \begin{tikzpicture}
            \draw (0,0) node[draw,rectangle,rounded corners=8pt, minimum width = 50pt] (p) {$T'_p$};
            \draw (-2.4,-1.5) node[draw,circle] (x1) {$\bm x_1$};
            \draw (-0.8,-1.5) node[draw,circle] (x2) {$\bm x_1$};
            \draw (0.8,-1.5) node[draw,circle] (x3) {$\bm x_n$};
            \draw (2.4,-1.5) node[draw,circle] (x4) {$\bm x_n$};
            \draw (-1.6,-1.5) node {$\cdots$};
            \draw (0,-1.5) node {$\cdots$};
            \draw (1.6,-1.5) node {$\cdots$};
            \draw (p) to[out=-170,in=90,looseness=0.5] (x1);
            \draw (p) to[out=-140,in=90,looseness=0.5] (x2);
            \draw (p) to[out=-40,in=90,looseness=0.5] (x3);
            \draw (p) to[out=-10,in=90,looseness=0.5] (x4);
        \end{tikzpicture}
        }
    \end{equation}
    where $T'_p$ is an $SO(3)$-symmetric tensor. Inserting identities, we have
    \begin{equation}
        p(\bm x_1,\dots,\bm x_n)=\adjustbox{valign=c}{
        \begin{tikzpicture}
            \draw (0,0) node[draw,rectangle,rounded corners=8pt, minimum width = 50pt] (p) {$T'_p$};
            \draw (-2.4,-1.5) node[draw,circle] (P1) {$P_{l_1}$};
            \draw (-0.8,-1.5) node[draw,circle] (P2) {$P_{l_1}$};
            \draw (0.8,-1.5) node[draw,circle] (P3) {$P_{l_n}$};
            \draw (2.4,-1.5) node[draw,circle] (P4) {$P_{l_n}$};
            \draw (-2.4,-3.5) node[draw,circle] (x1) {$P_{l_1}(\bm x_1)$};
            \draw (-0.8,-3.5) node[draw,circle] (x2) {$P_{l_1}(\bm x_1)$};
            \draw (0.8,-3.5) node[draw,circle] (x3) {$P_{l_n}(\bm x_n)$};
            \draw (2.4,-3.5) node[draw,circle] (x4) {$P_{l_n}(\bm x_n)$};
            \draw (-1.6,-1.5) node {$\cdots$};
            \draw (0,-1.5) node {$\cdots$};
            \draw (1.6,-1.5) node {$\cdots$};
            \draw (p) to[out=-170,in=90,looseness=0.5] (P1);
            \draw (p) to[out=-140,in=90,looseness=0.5] (P2);
            \draw (p) to[out=-40,in=90,looseness=0.5] (P3);
            \draw (p) to[out=-10,in=90,looseness=0.5] (P4);
            \draw (P1) to[out=-120,in=120] (x1); 
            \draw (P1) to[out=-60,in=60] (x1); 
            \draw (P2) to[out=-120,in=120] (x2); 
            \draw (P2) to[out=-60,in=60] (x2); 
            \draw (P3) to[out=-120,in=120] (x3); 
            \draw (P3) to[out=-60,in=60] (x3); 
            \draw (P4) to[out=-120,in=120] (x4); 
            \draw (P4) to[out=-60,in=60] (x4); 
            \draw (-2.4,-2.5) node {$\cdots$};
            \draw (-0.8,-2.5) node {$\cdots$};
            \draw (0.8,-2.5) node {$\cdots$};
            \draw (2.4,-2.5) node {$\cdots$};
        \end{tikzpicture}
        }
    \end{equation}
    where the tensor network
    \begin{equation}
        \adjustbox{valign=c}{\begin{tikzpicture}
            \draw (0,0) node[draw,rectangle,rounded corners=8pt, minimum width = 50pt] (p) {$T'_p$};
            \draw (-2.4,-1.5) node[draw,circle] (P1) {$P_{l_1}$};
            \draw (-0.8,-1.5) node[draw,circle] (P2) {$P_{l_1}$};
            \draw (0.8,-1.5) node[draw,circle] (P3) {$P_{l_n}$};
            \draw (2.4,-1.5) node[draw,circle] (P4) {$P_{l_n}$};
            \draw (-1.6,-1.5) node {$\cdots$};
            \draw (0,-1.5) node {$\cdots$};
            \draw (1.6,-1.5) node {$\cdots$};
            \draw (p) to[out=-170,in=90,looseness=0.5] (P1);
            \draw (p) to[out=-140,in=90,looseness=0.5] (P2);
            \draw (p) to[out=-40,in=90,looseness=0.5] (P3);
            \draw (p) to[out=-10,in=90,looseness=0.5] (P4);
            \draw (P1) to[out=-120,in=90] (-2.8,-2.5); 
            \draw (P1) to[out=-60,in=90] (-2,-2.5); 
            \draw (P2) to[out=-120,in=90] (-1.2,-2.5); 
            \draw (P2) to[out=-60,in=90] (-0.4,-2.5); 
            \draw (P3) to[out=-120,in=90] (0.4,-2.5); 
            \draw (P3) to[out=-60,in=90] (1.2,-2.5); 
            \draw (P4) to[out=-120,in=90] (2,-2.5); 
            \draw (P4) to[out=-60,in=90] (2.8,-2.5); 
            \draw (-2.4,-2.5) node {$\cdots$};
            \draw (-0.8,-2.5) node {$\cdots$};
            \draw (0.8,-2.5) node {$\cdots$};
            \draw (2.4,-2.5) node {$\cdots$};
        \end{tikzpicture}}
    \end{equation}
    is $SO(3)$-symmetric and the free legs are of 3D representation. By Lemma 3.2, the contraction of this tensor network is linear combination of tensor product of delta tensors and at most one Levi-Civita tensor. Therefore $p(\bm x_1,\dots,\bm x_n)$ is linear combination of contraction  of  tensor network formed by $P_{l_1}(\bm x_1),\dots,P_{l_n}(\bm x_n)$ (multiplicity is allowed) together with at most one Levi-Civita tensor $\epsilon_{ijk}$. Since contraction of disconnected tensor network is product of the contraction of each component, $\mathbb R[V]^{SO(3)}$ is generated by the contraction of connected tensor network formed by $P_{l_1}(\bm x_1),\dots,P_{l_n}(\bm x_n)$ (multiplicity is allowed) together with at most one Levi-Civita tensor $\epsilon_{ijk}$.
\end{proof}

\section{Proof of Proposition \ref{cont_invar_fun}}\label{sec:proof_cont_invar_fun}
\begin{proof}[Proof of Proposition \ref{cont_invar_fun}]
Let a compact set $K\subset \bigoplus_j V_j$ and $\eta>0$. Denote input ${\bm x}=(\bm x_1,\dots, \bm x_n)$ and $R\in SO(3)$. By the Stone–Weierstrass theorem, there exists a polynomial $p$ such that
\begin{equation}
    \sup_{\bm x\in K}|f(\bm x)-p(\bm x)|<\eta.
\end{equation}
Define the group-averaged polynomial
\begin{equation}
    \bar p(\bm x):=\int_{SO(3)} p(R\cdot \bm x)dR,
\end{equation}
where $dR$ is the normalized Haar measure. Then $\bar p$ is still a polynomial and is exactly $SO(3)$ invariant. Moreover, since $f$ is invariant, we have $f(R\cdot \bm x)=f(\bm x)$ for all $R$, hence
\begin{equation}
    |f(\bm x)-\bar p(\bm x)|
    =\left|\int_{SO(3)} \big(f(R\cdot \bm x)-p(R\cdot \bm x)\big)dR\right|
    \le \int_{SO(3)} |f(R\cdot \bm x)-p(R\cdot \bm x)|dR
    \le \sup_{y\in K'}|f(y)-p(y)|,
\end{equation}
where $K':=SO(3)\cdot K$ is compact. Therefore, choosing $\bar p$ to approximate $f$ uniformly on $K'$ yields
\begin{equation}
    \sup_{\bm x\in K'}|f(\bm x)-\bar p(\bm x)|<\eta.
\end{equation}
Finally, $\bar p\in \mathbb{R}[V]^{SO(3)}$. Let $g_1,\ldots,g_k$ be the tensor network generators of $\mathbb{R}[V]^{SO(3)}$. There exists an ordinary polynomial $q$ such that
\begin{equation}
    \bar p(\bm x)=q\big(g_1(\bm x),\ldots,g_k(\bm x)\big).
\end{equation}
Setting $\tilde f(\bm x):=q(g_1(\bm x),\ldots,g_k(\bm x))$ completes the proof.

\end{proof}

\section{Obtaining the equivariant functions from the invariant functions }\label{sec:equ_from_in}
We can give the proof of Lemma \ref{th_equ_f_in} as following:
\begin{proof}[Proof of Lemma \ref{th_equ_f_in}]
 Since $f$ is invariant, for each $R\in G$,

\begin{align}
 T_{\rm up}(f)^{i}(\bm x_1, \cdots,\bm x_n)_\alpha =&\left. \frac{\partial f(\bm x_1, \cdots,\bm x_n, \bm y_1,\cdots,\bm y_m)}{\partial (\bm y_i)_\alpha} \right|_{\bm y_1=\cdots=\bm y_m=0}\\
  =& \left. \frac{ \partial f(R \cdot \bm x_1, \cdots, R \cdot \bm x_n,  R \cdot \bm y_1,\cdots,R \cdot \bm y_m)}{\partial (\bm y_i)_\alpha}\right|_{\bm y_1=\cdots=\bm y_m=0} \\
  =& \sum_\beta \left. \frac{ \partial f(R \cdot \bm x_1, \cdots, R \cdot \bm x_n, \bm y'_1,\cdots,\bm y'_m)}{\partial (\bm y'_i)_\beta}\right|_{\bm y'_1=\cdots=\bm y'_m=0}\frac{\partial( \bm y'_i)_\beta}{\partial(\bm y_i)_\alpha} \\
  =& \sum_\beta T_{\rm up}(f)^{i}(R \cdot \bm x_1, \cdots, R \cdot \bm x_n)_\beta \rho_i(R)_{\beta\alpha}
\end{align}
where $\bm y'_i=R \cdot \bm y_i$.

Therefore
\begin{align}
     T_{\rm up}(f)^{i}(R \cdot \bm x_1, \cdots, R \cdot \bm x_n)_\alpha =&\sum_\beta  T_{\rm up}(f)^{i}(\bm x_1, \cdots, \bm x_n)_\beta \rho_i(R^{-1})_{\beta\alpha}\\
     =&\sum_\beta \bar\rho_i(R)_{\alpha\beta} T_{\rm up}(f)^{i}(\bm x_1, \cdots, \bm x_n)_\beta \\
     =& (R \cdot T_{\rm up}(f)^{i}(\bm x_1, \cdots, \bm x_n))_\alpha
\end{align}
$T_{\rm up}(f)$ is equivariant.

To prove that any equivariant function can be obtained in this way, we consider the following construction. Given an equivariant function $f: \bigoplus_j V_j  \rightarrow\bigoplus_i U_i$ with input $\bm x_1,\dots,\bm x_n$ in space $V_1,\dots,V_n$ and output $\bm y_1,\dots,\bm y_m$ in space $U_1,\dots,U_m$, we can construct an invariant function $T_{\rm down}(f):\bigoplus_j V_j\oplus\bigoplus_i \bar U_i\rightarrow F$, where $G$ acts on $\bar U_i$ by the dual representation of $U_i$, by defining 
\begin{equation}
    T_{\rm down}(f)(\bm x_1, \cdots, \bm x_n,\bm y_1,\dots,\bm y_m) = \sum_i \left \langle f^i(\bm x_1, \cdots, \bm x_n), \bm y_i  \right \rangle \label{eqn:equiv2inv}
\end{equation}
where we have choose a natural set of basis for $U_i$ and the corresponding dual basis for $\bar U_i$,  $\left \langle \cdot  , \cdot \right \rangle$ denotes the natural function $U_i\times \bar U_i\rightarrow \mathbb R$.

By simple deduction, one can show that $T_{\rm up}\circ T_{\rm down} (f)=f$. Therefore, any equivariant function $f$ can be constructed by $T_{\rm up}(h)$ where $h= T_{\rm down} (f)$.
\end{proof}

Notice that for $S U(2)$ and $S O(3)$, each representation is similar to its dual. 
The transformation $ T_{\rm up}$ and $ T_{\rm down}$ can be pictorially expressed by Fig.\ref{fig:bend_up_down}.  
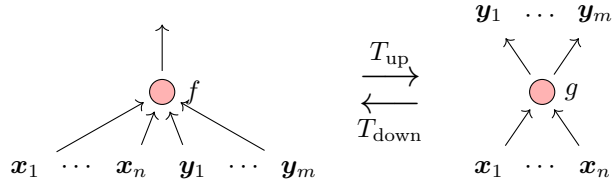
\begin{figure}[htb!]
    \centering
    \begin{tikzpicture}
        \begin{scope}
            \draw (0,0) node[draw, circle,fill = red!30,minimum width = 5pt] (f) {} ;
            \path (f) node[right = 5pt]  {$f$};
            \draw(-1.8,-0.8) node[anchor=north] (x1) {$\bm x_1$};
            \draw(-0.4,-0.8) node[anchor=north] (x2) {$\bm x_n$};
            \draw(1.8,-0.8) node[anchor=north] (y1) {$\bm y_m$};
            \draw(0.4,-0.8) node[anchor=north] (y2) {$\bm y_1$};
            \draw(-1.1,-0.8) node[anchor=north] {$\cdots$};
            \draw(1.1,-0.8) node[anchor=north] {$\cdots$};
            \draw[->,shorten >=3pt,shorten <=3pt] (x1) -- (f);
            \draw[->,shorten >=3pt,shorten <=3pt] (x2) -- (f);
            \draw[->,shorten >=3pt,shorten <=3pt] (y1) -- (f);
            \draw[->,shorten >=3pt,shorten <=3pt] (y2) -- (f);
            \draw[->,shorten >=3pt,shorten <=3pt] (f) -- (0,1);
        \end{scope}
        \draw (3,0.4)node {\Large{$\overset{T_{\rm up}}\longrightarrow$}};
        \draw (3,-0.4)node {\Large{$\underset{T_{\rm down}}\longleftarrow$}};
        \begin{scope}[xshift=5cm]
            \draw (0,0) node[draw, circle,fill = red!30,minimum width = 5pt] (g) {} ;
            \path (g) node[right = 5pt]  {$g$};
            \draw(-0.7,-0.8) node[anchor=north] (x3) {$\bm x_1$};
            \draw(0.7,-0.8) node[anchor=north] (x4) {$\bm x_n$};
            \draw(0,-0.8) node[anchor=north] {$\cdots$};
            \draw(-0.7,0.8) node[anchor=south] (y3) {$\bm y_1$};
            \draw(0.7,0.8) node[anchor=south] (y4) {$\bm y_m$};
            \draw(0,0.8) node[anchor=south] {$\cdots$};
            \draw[->,shorten >=3pt,shorten <=3pt] (x3) -- (g);
            \draw[->,shorten >=3pt,shorten <=3pt] (x4) -- (g);
            \draw[->,shorten >=3pt,shorten <=3pt] (g) -- (y3);
            \draw[->,shorten >=3pt,shorten <=3pt] (g) -- (y4);
        \end{scope}
    \end{tikzpicture}
    \caption{$T_{\rm up}$ transforms an invariant function into an equivariant function. $T_{\rm down}$ transforms an equivariant function into an invariant function.}
    \label{fig:bend_up_down}
\end{figure}

\section{Proof of Proposition \ref{equ_tns}}\label{proof_equ_tns}
\begin{proof}[Proof of Proposition \ref{equ_tns}]
In the first part of the proof, we show that any continuous $SO(3)$ equivariant function can be approximated by $SO(3)$ equivariant polynomials. We adopt the group averaging method for equivariant maps, following the approach described in \cite{Yarotsky2022}.

Let $V:=\bigoplus_j V_j$ and $U:=\bigoplus_i U_i$. Let $K\subset V$ be a compact set and $\varepsilon>0$. Denote the input $\bm x=(\bm x_1,\ldots,\bm x_n)$ and the actions on $V$ and $U$ by
$R^V$ and $R^U$, respectively. The $SO(3)$ equivariance of $h:V\to U$ means that for all
$R\in SO(3)$,
\begin{equation}
    (R^U)^{-1}h(R^V  \bm x)=h(\bm x).
\end{equation}

By the Stone--Weierstrass theorem applied componentwise, there exists a polynomial map $p:V\to U$ such that
\begin{equation}
    \sup_{\bm x\in K'}\|h(\bm x)-p(\bm x)\|<\delta,
\end{equation}
where $K':=SO(3)\cdot K$ is compact. Define the group-averaged polynomial
\begin{equation}
    \bar p(\bm x):=\int_{SO(3)} (R^U)^{-1}p(R^V \bm x)\,dR,
\end{equation}
Then $\bar p$ is still a polynomial and is
exactly $SO(3)$ equivariant. Moreover, since $h$ is equivariant, we have
$(R^U)^{-1}h(R^V \bm x)=h(\bm x)$ for all $R$, hence for any $\bm x\in K'$,
\begin{equation}
\begin{aligned}
    \|\bar p(\bm x)-h(\bm x)\|
    &=\left\|\int_{SO(3)} (R^U)^{-1}\big(p(R^V \bm x)-h(R^V \bm x)\big)\,dR\right\| \\
    &\le \int_{SO(3)} \|(R^U)^{-1}\|\,\|p(R^V \bm x)-h(R^V \bm x)\|\,dR \\
    &\le C_U \sup_{z\in K'}\|p(\bm z)-h(\bm z)\|,
\end{aligned}
\end{equation}
where $C_U:=\sup_{R\in SO(3)}\|(R^U)^{-1}\|<\infty$.
Therefore,
\begin{equation}
    \sup_{x\in K'}\|\bar p(\bm x)-h(\bm x)\|
    \le C_U \sup_{z\in K'}\|p(\bm z)-h(\bm z)\|
    < C_U\delta.
\end{equation}
Choosing $\delta=\varepsilon/C_U$ yields
\begin{equation} \label{eq:app_gene_fun}
    \sup_{x\in K'}\|\bar p(\bm x)-h(\bm x)\|<\varepsilon,
\end{equation}

In the second part of the proof, we show that any $SO(3)$ equivariant polynomials can be written as
\begin{equation}\label{Eq:equ_poly}
    \tilde{h}^i(\bm x) = \sum_j ^{N_i}q^i_{j}(g_1,\dots,g_k) t^i_j(\bm x)
\end{equation}
According to Lemma \ref{th_equ_f_in}, the equivariant polynomials can be obtained by differentiation of invariant polynomials. We can classify the tensor network generators into three types
\begin{enumerate}
    \item $G_1$: tensor network generators that only contain $\bm x_i$
    \item $G_2$: tensor network generators that contain exactly one $\bm y_i$
    \item $G_3$: tensor network generators that contain more than one $\bm y_i$
\end{enumerate}
Then we have
\begin{equation}
    \tilde{h}^{i}(\bm x) =\left. \frac{\partial f(\{ g_u\}, \{\bar { t}^v_k\}, \{ s_w\})}{\partial \bm y_i} \right|_{\bar {t}^v_j= s_w=0} 
\end{equation}
where $ g_u\in G_1$, $\bar {t}^v_k\in G_2$ ($v$ means the generator contains exactly one $\bm y_v$), $s_w\in G_3$.

It's easy to see that 
\begin{align}
    h^{i}(\bm x) &=\sum_j\left. \frac{\partial f(\{g_u\}, \{\bar { t}^v_k\}, \{ s_w\})}{\partial \bar{ t}^i_j}\right|_{\bar { t}^v_j= s_w=0} \frac{\partial\bar { t}^i_j}{\partial \bm y_i} \\
    &=\sum_jq^i_j(\{ g_u\})  t^i_j
\end{align}
where $q^i_j(\{ g_u\})$ is a function that represents $\left. \frac{\partial f(\{ g_u\}, \{\bar { t}^v_k\}, \{ s_w\})}{\partial \bar{ t}^i_j}\right|_{\bar { t}^v_j= s_w=0}$, and $ t^i_j$ is $\bar { t}^i_j$ with $\bm y_i$ missing.

We note $\bar p$ is the $SO(3)$ equivariant polynomials. Combining Eq.(\ref{eq:app_gene_fun}) and Eq.(\ref{Eq:equ_poly}), we completes the proof.
\end{proof}

\section{Proof of the Proposition \ref{spher_eq}}\label{Proof_spher_eq}
\begin{proof}[proof of the Proposition \ref{spher_eq}]
Generally, $t_i$ can be obtained by contracting tensor networks built from $\{\bm x_1,\ldots,\bm x_n\}$, a single $P_l$, copies of $\delta_{ij}$, and at most one $\epsilon_{ijk}$. We now show that the properties of $P_l$ results in the restricted topologies in \ref{eq:connect_spher}.

By Lemma \ref{P_l_proper} (3) (traceless), contracting any two input legs of $P_l$ with $\delta_{ij}$
yields 0. Hence no contraction can connect two legs of $P_l$ directly and every leg of $P_l$ must be contracted with indices belonging to other tensors, which are ultimately the input vectors and possibly
$\epsilon_{ijk}$. Moreover, by Lemma \ref{P_l_proper} (4), contracting two indices of $\epsilon_{ijk}$ with two legs of $P_l$ also yields 0. Consequently, if an $\epsilon_{ijk}$ is present in the network, it can share at most one index with $P_l$; the other two indices of $\epsilon_{ijk}$ must contract
with indices coming from the input vectors. Combining the above constraints, any non-vanishing network must be of one of the two forms in \ref{eq:connect_spher}.
\end{proof}

\section{Represent spherical tensor equivariant function by tensor network}\label{sec:sph_equiv_example}

The TP operations can be expressed by
\begin{equation}
    \bm c_t=\sum_{rs} M_{rst}\bm a_r \bm b_s
\end{equation}
where the type of $\bm a$, $\bm b$ and $\bm c$ are $l_a$, $l_b$ and $l_c$ respectively,  $1\leq r\leq 2l_a+1,1\leq s\leq 2l_b+1,1\leq t\leq 2l_c+1$ and $M$ is a rank-3 symmetric tensor, whose indices are of representation $l_a,l_b$ and $l_c$.

$M_{rst}$ is proportional $C_{rst}$, which is the unique symmetric projection tensor $(l_a)\otimes (l_b)\rightarrow(l_c)$ (also called CG coefficients). In other words, we have
\begin{equation}
    \bm c_t=\sum_{rs} AC_{rst}\bm a_r \bm b_s \label{eqn:sphe2}
\end{equation}

Following the construction method of Proposition \ref{Spher_gene}, we have

\begin{equation}
    \bm c_t =A'
    \adjustbox{valign = c, scale=0.8}{\begin{tikzpicture}
        \draw (90:1) node[draw,circle] (pc) {$P_{l_c}$};
        \draw (-30:1) node[draw,circle] (pb) {$P_{l_b}$};
        \draw (210:1) node[draw,circle] (pa) {$P_{l_a}$};
        \coordinate (c) at (90:2.3) ;
        \draw (-30:2.3) node[draw,circle] (b) {$\bm b$};
        \draw (210:2.3) node[draw,circle] (a) {$\bm a$};
        \draw (pa) edge["\footnotesize$l_a$"'] (a);
        \draw (pb) edge["\footnotesize$l_b$"] (b);
        \draw (pc) edge["\footnotesize$l_c$"] (c);
        \draw (pa) edge[ultra thick] (pb);
        \draw (pb) edge[ultra thick] (pc);
        \draw (pc) edge[ultra thick] (pa);
        \node at (2.25,0) {\textit{or}};
    \end{tikzpicture}}
    \qquad
    \adjustbox{valign = c, scale=0.8}{\begin{tikzpicture}
        \node[circle, fill, inner sep=1.5pt] (fill) at (0,0) {}; 
        \draw (90:1) node[draw,circle] (pc) {$P_{l_c}$};
        \draw (-30:1) node[draw,circle] (pb) {$P_{l_b}$};
        \draw (210:1) node[draw,circle] (pa) {$P_{l_a}$};
        \coordinate (c) at (90:2.3) ;
        \draw (-30:2.3) node[draw,circle] (b) {$\bm b$};
        \draw (210:2.3) node[draw,circle] (a) {$\bm a$};        
        \draw (pa) edge["\footnotesize$l_a$"'] (a);
        \draw (pb) edge["\footnotesize$l_b$"] (b);
        \draw (pc) edge["\footnotesize$l_c$"] (c);     
        \draw (pa) edge[ultra thick] (pb);
        \draw (pb) edge[ultra thick] (pc);
        \draw (pc) edge[ultra thick] (pa);
        \draw (fill) edge (pa);
        \draw (fill) edge (pb);
        \draw (fill) edge (pc);
    \end{tikzpicture}}
\end{equation}

We define
\begin{equation}
     C'_{rst} = 
    \adjustbox{valign = c, scale=0.8}{\begin{tikzpicture}
        \draw (90:1) node[draw,circle] (pc) {$P_{l_c}$};
        \draw (-30:1) node[draw,circle] (pb) {$P_{l_b}$};
        \draw (210:1) node[draw,circle] (pa) {$P_{l_a}$};
        \draw (90:2.3) node (c) {$t$};
        \draw (-30:2.3) node (b) {$s$};
        \draw (210:2.3) node (a) {$r$};
        \draw (pa) edge["\footnotesize$l_a$"'] (a);
        \draw (pb) edge["\footnotesize$l_b$"] (b);
        \draw (pc) edge["\footnotesize$l_c$"] (c);
        \draw (pa) edge[ultra thick] (pb);
        \draw (pb) edge[ultra thick] (pc);
        \draw (pc) edge[ultra thick] (pa);
        \node at (2.25,0) {\textit{or}};
    \end{tikzpicture}}
    \qquad
    \adjustbox{valign = c, scale=0.8}{\begin{tikzpicture}
        \node[circle, fill, inner sep=1.5pt] (fill) at (0,0) {}; 
        \draw (90:1) node[draw,circle] (pc) {$P_{l_c}$};
        \draw (-30:1) node[draw,circle] (pb) {$P_{l_b}$};
        \draw (210:1) node[draw,circle] (pa) {$P_{l_a}$};
        \draw (90:2.3) node (c) {$t$};
        \draw (-30:2.3) node (b) {$s$};
        \draw (210:2.3) node (a) {$r$};        
        \draw (pa) edge["\footnotesize$l_a$"'] (a);
        \draw (pb) edge["\footnotesize$l_b$"] (b);
        \draw (pc) edge["\footnotesize$l_c$"] (c);     
        \draw (pa) edge[ultra thick] (pb);
        \draw (pb) edge[ultra thick] (pc);
        \draw (pc) edge[ultra thick] (pa);
        \draw (fill) edge (pa);
        \draw (fill) edge (pb);
        \draw (fill) edge (pc);
    \end{tikzpicture}}
\end{equation}

To prove that our construct is equivalent to the TP operation, we only need to prove that $C'_{rst}$ is non-zero (obviously $C'_{rst}=1$ when $r,s,t$ are of the highest weight) and is proportional to $C_{rst}$ by a factor independent of $\bm a,\bm b, \bm c$, which is clear since $C'_{rst}$ and $C_{rst}$ are both symmetric rank-3 tensor with indices of representation $l_a,l_b,l_c$, and the symmetric rank-3 tensor with indices of representation $l_a,l_b,l_c$ is unique up to rescaling.
\end{document}